\documentclass[10pt,letterpaper]{article}
\usepackage[letterpaper,margin=1in]{geometry}
\usepackage[T1]{fontenc}
\usepackage{times}
\usepackage{microtype}
\usepackage{graphicx}
\usepackage[round,authoryear]{natbib}
\usepackage{xcolor}
\usepackage{amsmath,amssymb,amsthm,mathtools}
\usepackage{algorithm,algorithmic}
\usepackage{booktabs}
\usepackage[colorlinks,
            linkcolor=blue,
            anchorcolor=blue,
            citecolor=teal,
            ]{hyperref} 
\usepackage{url}
\usepackage{enumitem}
\usepackage{comment}
\usepackage{subcaption}

\newtheorem{theorem}{Theorem}
\newtheorem{lemma}{Lemma}
\newtheorem{proposition}{Proposition}
\newtheorem{corollary}{Corollary}

\newcommand{\E}{\mathbb{E}}
\newcommand{\Var}{\mathrm{Var}}
\newcommand{\Cov}{\mathrm{Cov}}

\newcommand{\pip}{\pi_\alpha}
\newcommand{\rself}{r_{\mathrm{self}}}
\newcommand{\rstar}{r^{\star}}
\newcommand{\calY}{\mathcal{Y}}
\newcommand{\deltasg}{\delta}
\newcommand{\sigCsq}{\sigma^2_C}
\newcommand{\sigIsq}{\sigma^2_I}
\newcommand{\muC}{\mu_C}
\newcommand{\muI}{\mu_I}
\newcommand{\Deltas}{\Delta_s}

\title{Adaptive Power Sampling for LLM Reasoning }

\newcommand{\authorblock}[3]{%
  \begin{tabular}[t]{@{}c@{}}
    \normalsize #1 \\
    \small #2 \\
    \footnotesize\texttt{#3}
  \end{tabular}%
}
\author{
  \begin{tabular}{@{}c@{\hspace{1.5em}}c@{\hspace{1.5em}}c@{}}
    \authorblock{Bingnan Xiao\thanks{These authors contributed equally to this work.}}{Fudan University}{22110720061@m.fudan.edu.cn} &
    \authorblock{Chenhao Yang\footnotemark[1]}{University of Glasgow}{3165453Y@student.gla.ac.uk} &
    \authorblock{Bingcong Li\thanks{Corresponding authors: Bingcong Li (\texttt{bingtsongli@gmail.com}) and Xin Wang (\texttt{xwang11@fudan.edu.cn}).}}{ETH Zurich}{bingtsongli@gmail.com} \\[1.5ex]
    \multicolumn{3}{c}{%
      \begin{tabular}{@{}c@{\hspace{3em}}c@{}}
        \authorblock{Wei Ni}{Edith Cowan University}{wei.ni@ieee.org} &
        \authorblock{Xin Wang\footnotemark[2]}{Fudan University}{xwang11@fudan.edu.cn}
      \end{tabular}%
    }
  \end{tabular}
}

\date{}
\begin{document}
\maketitle

\begin{abstract}

Sequence-level power sampling has recently emerged as a training-free approach to reasoning by sampling from a sharpened output distribution of a base large language model (LLM). Nevertheless, existing methods typically sharpen the base model distribution uniformly across queries, overlooking variations in query difficulty and in how well the base model already handles each query. The goal of this work is to equip power sampling with query adaptivity. Theoretically, we show that the benefits of further sharpening are determined by the self-reward gap between correct and incorrect responses. Based on this insight, we propose \emph{Adaptive Power Sampling} (APS), which adjusts the sharpening exponent on a per-query basis at test time using the relationship between answer agreement and the model's self-reward. Experiments across diverse reasoning tasks, including MATH500, HumanEval, and GPQA, show that APS consistently outperforms power sampling with a fixed sharpening exponent, without additional training.

\end{abstract}

\section{Introduction}
\label{sec:introduction}

Reasoning is fundamentally about generating and identifying high-quality solution trajectories from a model's output distribution. Recent work suggests that post-training often achieves this by redistributing probability mass toward sequences that are more likely to yield correct answers~\citep{huang2024sharpening,yue2025rlvr,he2025rewarding}. 
This perspective, commonly referred to as \emph{sequence-level distribution sharpening}, also provides a foundation for test-time reasoning~\citep{karan2025reasoning}. In particular, strong reasoning performance on tasks such as math and coding can be achieved by sampling from a sharpened version of the base model's sequence-level distribution, even when the base model has not undergone post-training~\citep{karan2025reasoning,ji2026scalable}.

Compared with post-training, a clear advantage of test-time reasoning is its reduced reliance on post-training data. This benefit, however, comes at the cost of query-adaptive sharpening. Post-training can implicitly learn, \textit{for each query or prompt}, an appropriate degree of sharpening, concentrating sufficient probability mass on high-quality responses while preserving enough diversity for the target task. In contrast, test-time reasoning methods typically commit to a single predetermined degree of sharpening, ``betting on'' how concentrated the desired distribution should be
\citep{karan2025reasoning,azizi2026powersmc,ji2026scalable,nguyen2026future,meng2026degs}.

Such a fixed choice, however, may not be sufficient. Intuitively, the appropriate degree of sharpening should vary across problems and domains, e.g., between math and coding, or also within the same domain, between, e.g., primary-school and college-level math problems. Motivated by this intuition, which we further validate experimentally, we study whether the sharpening strength can instead be determined \textit{query-wise} at test time, without relying on post-training data or manual tuning.
To equip power sharpening with adaptivity, we first study how reasoning performance varies with the sharpening strength. More precisely,
let $x$ denote a prompt, and let $p_\theta(y\mid x)$ denote the probability that a base model $\theta$ generates a complete response $y$ given $x$. 
Test-time sharpening \citep{karan2025reasoning} samples responses from the sharpened distribution
$ \pi_\alpha(y\mid x) \propto p_\theta(y\mid x)^\alpha$,
where $\alpha \geq 1$ controls the degree of distributional sharpening. 
We show that for this particular query $x$, the preferred sharpening direction is jointly determined by the relationship between response correctness and sequence log-likelihood.
Stronger sharpening, i.e., larger $\alpha$, is more beneficial when correct responses tend to have higher sequence log-likelihood than incorrect ones, and harmful otherwise. 
Translating these theoretical insights into practice, we propose \emph{Adaptive Power Sampling} (APS), a training-free reasoning method that dynamically adjusts the sharpening exponent $\alpha$ at test time. Since ground-truth correctness is unavailable, APS leverages self-consistency \citep{wang2023self,aggarwal2023adaptiveconsistency} among sampled answers as a surrogate signal. Combined with the theoretical characterization above, APS adjusts $\alpha$ on the fly for each query. Such adaptivity improves reasoning performance across several tasks. In a nutshell, our main contributions are summarized as follows:
\begin{itemize}
    \item Motivated by our empirical observation that the preferred sharpening strength varies across queries, 
    we develop a theoretical characterization of how reasoning performance changes with the sharpening strength. For each query, whether a stronger sharpening is preferred is determined by the self-reward gap between correct and incorrect responses.

    \item Building on these insights, we propose \emph{Adaptive Power Sampling} (APS), which adjusts the sharpening exponent $\alpha$ for each query at test time using statistics from sampled responses. APS achieves query-wise adaptivity analogous to that of RL-based post-training, while requiring no additional training or post-training data.

    \item We evaluate APS on multiple models and reasoning benchmarks, showing consistent improvements over fixed-exponent power sampling, and competitive
    performance against RL-post-trained methods.
\end{itemize}

\section{Related Work}
\label{sec:related}

\textbf{Reasoning via post-training.}
Reinforcement learning (RL) has been shown to improve the reasoning performance of LLMs on mathematical and coding tasks \citep{ouyang2022training,shao2024deepseekmath,guo2025deepseekr1,
yu2025dapo,liu2025understanding,lin2025cppo}. Recent work \citep{huang2024sharpening,yue2025rlvr,he2025rewarding,shao2025spurious,song2025outcome} suggests that RL post-training can reinforce existing
high-probability responses in the model, highlighting
distribution reshaping as an important mechanism behind reasoning performance improvement. The view motivates investigating whether similar gains can be achieved by directly sharpening the model distribution and adapting the sharpening strength at inference time.

\textbf{Inference-time scaling} improves LLM performance by allocating additional computation at test time~\citep{zhang2025survey,hariri2026testtime}. Representative approaches include repeated sampling and answer aggregation or selection ~\citep{wang2023self,li2022competition,brown2024large}, verifier- or reward-guided selection of candidate solutions~\citep{cobbe2021training,lightman2023lets,chow2024inference}, and explicit search over intermediate reasoning states~\citep{xie2023self,yao2023tree,zhou2024language,snell2024scaling}. While effective, these methods motivate more direct control of the model's response
distribution at inference time.

\textbf{Reasoning at test time.}
Power sampling directly sharpens the model's sequence-level distribution by assigning greater weight to high-probability responses~\citep{karan2025reasoning}. It provides a training-free alternative that can achieve reasoning performance comparable to RL post-training while better preserving generation diversity.  Recent studies have further improved the efficiency of sequence-level power sampling via,
e.g., a future-aware autoregressive approximation~\citep{ji2026scalable}, particle and sequential Monte Carlo methods~\citep{azizi2026powersmc,nguyen2026future,wang2026cache} and entropy-guided proposals~\citep{guo2026egps,zhou2026cutting}. While these methods improve sampling efficiency, they typically rely on a predetermined sharpening strength across queries. In contrast, our work studies how the sharpening strength should be adaptive to individual queries.

Power sampling can also be understood from a theoretical perspective. \citep{tomihari2026bridges} connects power sampling with self-reward KL-regularized RL and self-distillation. In parallel, our work studies how reasoning performance changes with the sharpening strength, 
thereby motivating the adaptive selection of sharpening strength at test time.
\section{Preliminaries}
\label{sec:setup}
\subsection{Sequence-Level Power Distribution}
Given prompt $x$, let $p_\theta(y|x)$ denote the probability of a base model $\theta$ generating a complete response $y=(y^1,\ldots,y^T)$. In this work, $p_\theta(y|x)$ is referred to as the sequence-level likelihood, and by the autoregressive factorization, it can be equivalently obtained via
$p_\theta(y|x) = \prod_{t=1}^{T}p_\theta(y^t|x,y^{<t})$.

Power sampling, originally proposed in \citep{karan2025reasoning}, leverages a sharpened sequence-level distribution to re-weight the base model's probability by a sharpening exponent $\alpha$. For $\alpha\ge 1$, the sequence-level power distribution is defined as
\begin{equation} \label{eq:power_distribution}
\pi_\alpha(y|x) = \frac{p_\theta(y|x)^\alpha}{Z_\alpha(x)},
\end{equation}
where $Z_\alpha(x) = \sum_{y}p_\theta(y|x)^\alpha$ is the normalization constant. The exponent $\alpha=1$ recovers the base model distribution, while a larger $\alpha$ concentrates probability mass on higher-likelihood complete responses.  To illustrate the sharpening effect, consider two complete responses $y$ and
$y'$ with $p_\theta(y\mid x)>p_\theta(y'\mid x)$. For $\alpha>1$, their
relative probabilities satisfy
\begin{equation}
\frac{\pi_\alpha(y\mid x)}{\pi_\alpha(y'\mid x)}
=
\left(
\frac{p_\theta(y\mid x)}{p_\theta(y'\mid x)}
\right)^\alpha
>
\frac{p_\theta(y\mid x)}{p_\theta(y'\mid x)}.
\end{equation}
Thus, increasing $\alpha$ further amplifies the relative advantage of responses with higher base-model likelihood, making the sequence-level distribution sharper. Note that \textit{sequence}-level sharpening is fundamentally different from \textit{token}-level low-temperature scaling in autoregressive decoding \citep{wang2020contextual}: the former reweights complete
sequences according to their sequence likelihoods, whereas the latter rescales the
next-token distribution locally at each decoding step. This has been demonstrated multiple times, e.g., \citep{karan2025reasoning,azizi2026powersmc,ji2026scalable,nguyen2026future,meng2026degs}. 
\subsection{One Fixed Exponent Does Not Fit Every Query } \label{motiexample}

Existing power sampling algorithms~\citep{karan2025reasoning,azizi2026powersmc,ji2026scalable,nguyen2026future,meng2026degs} rely on a fixed sharpening exponent $\alpha$. While it is natural to expect different tasks, such as coding and mathematics, to favor different values of $\alpha$, in this subsection we show that even within the same dataset, \textit{no single $\alpha$ performs equally well across all queries}.

To illustrate this, we evaluate Qwen2.5-Math-7B~\citep{yang2024qwen25math} on 20 randomly sampled queries from the MATH500 dataset~\citep{lightman2023lets}. We consider power-sampling exponents $\alpha\in \{1,2,4,8,16 \}$. For each query and each value of $\alpha$, we generate $N=16$ responses per random seed using the MCMC sampler of~\citet{karan2025reasoning}.
We use $P(\alpha\mid x)$ to denote the fraction of correct responses for query $x$ at exponent $\alpha$. 

 \begin{figure}[t]
  \centering
  \includegraphics[width=0.9\linewidth]{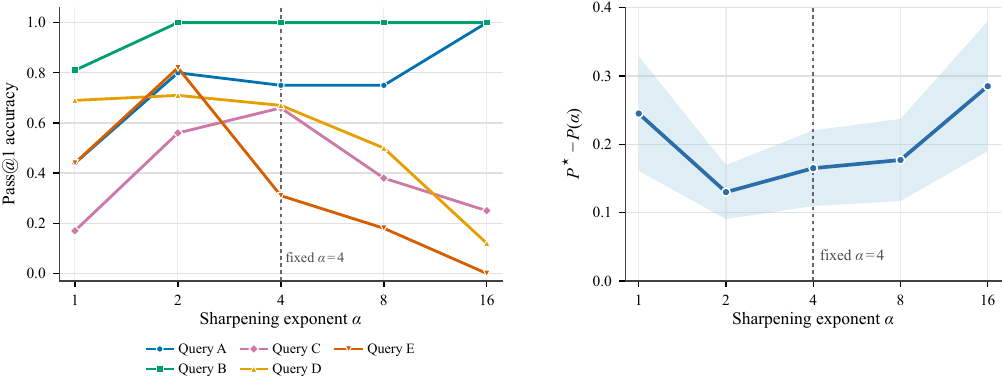}
  \caption{
  Motivating example on MATH500 with Qwen2.5-Math-7B.
  The left panel shows representative Pass@1 accuracy curves under power sampling over
  $\alpha\in\{1,2,4,8,16\}$.
  The right panel shows the mean oracle gap $P^\star-P(\alpha)$.
  }
  \label{fig:motivation}
\end{figure}

The results are shown in Figure~\ref{fig:motivation}.
In the left panel, 5 representative queries are plotted to visualize query-wise optimal $\alpha$. 
In particular, Query~A attains its highest pass@1 accuracy only at $\alpha=16$ while Query~B already reaches perfect accuracy at $\alpha=2$. In contrast, Queries~C--E achieve their best performance at an intermediate exponent.
In the right panel, we measure the performance loss from using a common exponent instead of the best exponent for each of the 20 sampled queries. For each query $x$, let $P^{*}(x)=\max_{a} P(a\mid x)$ denote its best accuracy over the tested exponents, and define the oracle gap as $P^{*}(x)-P(\alpha\mid x)$. The gap is smallest at $\alpha=2$, with an average value of $0.13$, whereas the fixed choice $\alpha=4$ used in \citep{karan2025reasoning} gives a larger gap of $0.17$. These results jointly show that a single fixed $\alpha$ can leave a noticeable performance gap relative to the best exponent for individual queries, motivating an adaptive choice of $\alpha$   at test time.

\section{Effect of the Sharpening Strength}
\label{sec:aap}
In this section, we theoretically study how $\alpha$ affects
reasoning performance under the sequence-level power distribution.
We characterize its effect on test-time correctness and identify when sharpening is beneficial or harmful. Following the same analytical framework, we also obtain an extension to group-relative RL as a byproduct, where adaptive sharpening can likewise be beneficial.
Throughout this section, we consider a fixed query $x$ and omit the dependence on $x$ for notational simplicity. 
\subsection{Definitions}

To examine how the sharpening exponent $\alpha$ affects the reasoning performance, we restrict our discussion to tasks with binary correctness outcomes, such as mathematics and coding. 
For a given prompt $x$, let $\rstar(y)\in\{0,1\}$ denote the binary task reward
of a complete response $y$, where $\rstar(y)=1$ indicates a correct response and $\rstar(y)=0$ indicates an incorrect response. Accordingly, we partition the response space into correct and incorrect sets, i.e., 
\begin{equation}
    \calY_C=\{y:\rstar(y)=1\},
    \quad
    \calY_I=\{y:\rstar(y)=0\}.
\end{equation}
The expected performance under exact sampling from $\pip$ is defined as
\begin{equation}
    P(\alpha)=\E_{\pip}[\rstar(Y)].
    \label{eq:performance}
\end{equation}
Since $\rstar$ is binary, $P(\alpha)$ is also the probability that a response sampled from $\pip$ is correct. Hence, a preferred sharpening exponent enlarges $P(\alpha)$.
Another useful quantity is the model's self-reward, which reflects the sequence-level likelihood assigned to a response $y$
\begin{equation}
    \rself(y)=\log p_\theta(y|x).
    \label{eq:self_reward}
\end{equation}
Under $\pip$, the conditional self-reward means and variances with respect to $\alpha$ is defined as
\begin{align}
    \muC(\alpha)&=\E_{\pip}[\rself(Y)\mid \rstar(Y)=1],
    & \muI(\alpha)&=\E_{\pip}[\rself(Y)\mid \rstar(Y)=0],\\
    \sigCsq(\alpha)&=\Var_{\pip}[\rself(Y)\mid \rstar(Y)=1],
    & \sigIsq(\alpha)&=\Var_{\pip}[\rself(Y)\mid \rstar(Y)=0].
\end{align}
We then define the self-reward gap $\deltasg(\alpha)$ to measure whether correct responses have higher model log-likelihood than incorrect responses under the current power distribution
\begin{equation}
    \deltasg(\alpha)=\muC(\alpha)-\muI(\alpha).
    \label{eq:self_reward_gap}
\end{equation}

\subsection{Theoretical Basis for Adaptive Sharpening}
\label{sec4.2}

For the test-time objective $P(\alpha)$, we study how the expected reasoning performance changes as $\alpha$ increases.
From \eqref{eq:power_distribution}, increasing $\alpha$ shifts relative probability mass toward responses with higher base-model likelihood.
Consequently, the marginal effect of sharpening on $P(\alpha)$ depends on whether model likelihood is aligned with the task reward.
The following theorem makes this relation exact.
\begin{theorem}
\label{thm:sharpening_derivative}
For the fixed prompt $x$ and the reward $\rstar$ independent of $\alpha$, we have 
\begin{equation}
    \frac{dP(\alpha)}{d\alpha}
    =
    \Cov_{\pip}(\rstar(Y),\rself(Y)).
    \label{eq:sharpening_derivative}
\end{equation}
\end{theorem}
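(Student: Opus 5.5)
The plan is a direct exponential-family computation. Write $\pip(y)=\exp(\alpha\,\rself(y)-\log Z_\alpha)$, so $\pip$ is an exponential family in the natural parameter $\alpha$ with sufficient statistic $\rself$ and log-partition function $A(\alpha)=\log Z_\alpha=\log\sum_y e^{\alpha\rself(y)}$. First I will differentiate the log-partition function:
\begin{equation*}
A'(\alpha)=\frac{\sum_y \rself(y)\,p_\theta(y|x)^\alpha}{Z_\alpha}=\E_{\pip}[\rself(Y)].
\end{equation*}
This yields the score identity $\frac{d}{d\alpha}\pip(y)=\pip(y)\bigl(\rself(y)-\E_{\pip}[\rself(Y)]\bigr)$.

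Next, since $\rstar$ does not depend on $\alpha$, I will differentiate $P(\alpha)=\sum_y\rstar(y)\pip(y)$ term by term:
\begin{equation*}
\frac{dP(\alpha)}{d\alpha}=\sum_y \rstar(y)\,\pip(y)\bigl(\rself(y)-\E_{\pip}[\rself]\bigr)=\E_{\pip}[\rstar\rself]-\E_{\pip}[\rstar]\,\E_{\pip}[\rself].
\end{equation*}
The right-hand side is exactly $\Cov_{\pip}(\rstar(Y),\rself(Y))$, which is \eqref{eq:sharpening_derivative}.

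The only nontrivial point is justifying the interchange of differentiation and summation. Responses have bounded length over a finite vocabulary, so the response space is finite. Every $y$ in the sum has $p_\theta(y|x)>0$, since zero-probability responses contribute nothing for $\alpha\ge1$ and can be dropped from the sum. On this finite set, $Z_\alpha$ is a finite sum of smooth positive functions of $\alpha$, so both interchanges are trivial.

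If one wants a countable response space instead, I would argue by dominated convergence on compact subintervals of $(1,\infty)$. For $\alpha\ge\alpha_0>1$, the terms $|\log p|\,p^\alpha$ are dominated by $C\,p^{\alpha_0'}$ for some $1<\alpha_0'<\alpha_0$, and $\sum_y p^{\alpha_0'}\le\sum_y p=1$. This case is the main obstacle, but it is routine.
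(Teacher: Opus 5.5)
Your proposal is correct and follows the same route as the paper. The paper differentiates $P(\alpha)=\sum_y \rstar(y)\,p_\theta(y\mid x)^\alpha/Z_\alpha$ by the quotient rule, identifies $Z_\alpha'/Z_\alpha=\E_{\pip}[\rself(Y)]$, and reads off the covariance, which is your score-identity computation without the exponential-family wording; your justification for interchanging differentiation and summation (immediate on the finite response space, dominated convergence for $\alpha>1$ otherwise) is a step the paper leaves implicit.
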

Theorem~\ref{thm:sharpening_derivative} provides an exact characterization of the marginal effect of sharpening. If task reward and self-reward are positively correlated under the current power distribution, increasing $\alpha$ improves exact expected performance. 
Conversely, if the covariance is negative, additional sharpening moves probability mass in a direction that hurts the task reward. Interestingly, the covariance in
\eqref{eq:sharpening_derivative} can be further decomposed into a simple self-reward gap, as given below.

\begin{corollary}[Self-reward gap decomposition]
\label{cor:gap_decomposition}
Based on the task reward $\rstar\in\{0,1\}$, we obtain
\begin{equation}
    \frac{dP(\alpha)}{d\alpha}
    =
    P(\alpha)(1-P(\alpha))\deltasg(\alpha).
    \label{eq:gap_decomposition}
\end{equation}
\end{corollary}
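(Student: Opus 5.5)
We take Theorem~\ref{thm:sharpening_derivative} as given, so $\frac{dP(\alpha)}{d\alpha}=\Cov_{\pip}(\rstar(Y),\rself(Y))$, and the only task is to rewrite this covariance using that $\rstar$ is binary. The plan is to condition on the value of $\rstar(Y)$ and express every moment in terms of $P(\alpha)=\Pr_{\pip}(\rstar(Y)=1)$ and the conditional means $\muC(\alpha)$, $\muI(\alpha)$.

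First, since $\rstar\in\{0,1\}$, we have $\rstar(Y)^2=\rstar(Y)$, hence $\E_{\pip}[\rstar(Y)]=P(\alpha)$. Next, $\E_{\pip}[\rstar(Y)\rself(Y)]=\Pr(\rstar=1)\,\E_{\pip}[\rself(Y)\mid\rstar(Y)=1]=P(\alpha)\muC(\alpha)$. By the law of total expectation, $\E_{\pip}[\rself(Y)]=P(\alpha)\muC(\alpha)+(1-P(\alpha))\muI(\alpha)$. Substituting into $\Cov=\E[\rstar\rself]-\E[\rstar]\E[\rself]$ gives
\begin{equation*}
P(\alpha)\muC(\alpha)-P(\alpha)\bigl(P(\alpha)\muC(\alpha)+(1-P(\alpha))\muI(\alpha)\bigr)
=P(\alpha)(1-P(\alpha))\bigl(\muC(\alpha)-\muI(\alpha)\bigr),
\end{equation*}
which equals $P(\alpha)(1-P(\alpha))\deltasg(\alpha)$ by \eqref{eq:self_reward_gap}.

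The only delicate point is well-definedness of the conditional means. When $P(\alpha)\in\{0,1\}$, one of $\muC(\alpha),\muI(\alpha)$ is conditioned on a null event. In that case $\rstar(Y)$ is almost surely constant under $\pip$, so the covariance is $0$. The right-hand side is also $0$ provided we adopt any finite convention for the undefined mean (the prefactor $P(1-P)$ vanishes). We also note that the conditional means are finite because $\rself$ has finite moments under $\pip$. This holds trivially for a finite response space (bounded length and vocabulary), which is the same assumption that justifies differentiating under the sum in Theorem~\ref{thm:sharpening_derivative}.
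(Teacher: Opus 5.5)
Your proof is correct and follows the paper's argument essentially line by line: you invoke Theorem~\ref{thm:sharpening_derivative}, write $\E_{\pip}[\rstar(Y)\rself(Y)]=P(\alpha)\muC(\alpha)$, expand $\E_{\pip}[\rself(Y)]$ by total expectation, and factor out $P(\alpha)(1-P(\alpha))$. Your treatment of the degenerate case $P(\alpha)\in\{0,1\}$, where one conditional mean is undefined, is a small extra that the paper leaves implicit.
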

Corollary 1 has a direct interpretation. Here, $P(\alpha)(1-P(\alpha))$ measures how much room is left for the correctness probability to change under the current power distribution: it is largest when correct and incorrect responses are both likely, i.e., $P(\alpha)=0.5$, and it becomes small when the sampler almost always succeeds or almost always fails. 
Prior work connects power sampling to the correlation between task reward and
self-reward ~\citep{tomihari2026bridges}. Theorem~\ref{thm:sharpening_derivative} and Corollary \ref{cor:gap_decomposition} further characterize the local change in expected
performance caused by increasing $\alpha$. Since $P(\alpha) \in [0,1]$ for binary $\rstar$,
increasing $\alpha$ locally improves performance if $\delta(\alpha)>0$ and locally degrades performance if $\delta(\alpha)<0$. 
This reveals the source of adaptivity: if $\delta(\alpha)$ is known, we can determine whether increasing or decreasing $\alpha$ would improve performance for this particular query.
Next, we study how
$\delta(\alpha)$ evolves with $\alpha$ to understand when further
sharpening may switch from improving performance to degrading it, or vice
versa.

\begin{lemma}[Self-reward gap dynamics]
\label{lem:gap_dynamics}
The self-reward gap evolves as
\begin{equation}
    \frac{d\deltasg(\alpha)}{d\alpha}
    =
    \sigCsq(\alpha)-\sigIsq(\alpha).
    \label{eq:gap_dynamics}
\end{equation}
\end{lemma}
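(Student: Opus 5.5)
The plan is to observe that, conditioned on correctness, the power distribution restricted to $\calY_C$ is itself a power-type exponential family in $\alpha$. Then apply the same derivative identity that underlies Theorem~\ref{thm:sharpening_derivative}, once on each class. Concretely, for $y\in\calY_C$ the conditional distribution is
\begin{equation*}
\pip(y\mid \rstar(Y)=1)=\frac{p_\theta(y)^\alpha}{Z_\alpha^C},\qquad Z_\alpha^C=\sum_{y\in\calY_C}p_\theta(y)^\alpha=\sum_{y\in\calY_C}e^{\alpha\,\rself(y)} .
\end{equation*}
The global normalizer $Z_\alpha$ cancels when we condition. Hence $\muC(\alpha)=\frac{d}{d\alpha}\log Z_\alpha^C$, and the analogous identity with $Z_\alpha^I=\sum_{y\in\calY_I}e^{\alpha\rself(y)}$ gives $\muI(\alpha)$.

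The key step is the standard log-partition computation. Differentiate $\muC(\alpha)=\sum_{y\in\calY_C}\rself(y)e^{\alpha\rself(y)}/Z_\alpha^C$ by the quotient rule. The numerator derivative gives $\E[\rself^2\mid C]$, and the normalizer derivative subtracts $(\E[\rself\mid C])^2$. This yields $\frac{d\muC}{d\alpha}=\sigCsq(\alpha)$. Equivalently, one can reuse the covariance identity from Theorem~\ref{thm:sharpening_derivative} for an arbitrary $\alpha$-independent test function $f$, namely $\frac{d}{d\alpha}\E_{\pi}[f]=\Cov_\pi(f,\rself)$, applied to the conditional family with $f=\rself$. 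The identical argument on $\calY_I$ gives $\frac{d\muI}{d\alpha}=\sigIsq(\alpha)$. Subtracting the two yields $\frac{d\deltasg}{d\alpha}=\sigCsq(\alpha)-\sigIsq(\alpha)$.

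The only delicate point is justifying term-by-term differentiation, since the response space may be large or countably infinite. For finite response spaces (bounded length $T$ with a finite vocabulary) this is immediate. In general I would note that $\rself\le 0$, so for $\alpha$ in a compact subset of $[1,\infty)$ the sums $\sum e^{\alpha\rself}\rself^k$ for $k=0,1,2$ are dominated by their values at the smallest $\alpha$ in that subset. These are finite whenever the relevant second moments exist, so differentiation under the sum is valid by dominated convergence. I would also assume that both $\calY_C$ and $\calY_I$ have positive base probability, so that the conditional quantities are well defined.
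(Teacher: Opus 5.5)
Your proposal is correct and follows the paper's proof: the paper also introduces the class normalizers $Z_C(\alpha)$ and $Z_I(\alpha)$, writes $\mu_C(\alpha)=\frac{d}{d\alpha}\log Z_C(\alpha)$, differentiates once more to get $\sigma_C^2(\alpha)$ (and likewise for the incorrect class), and subtracts. Your dominated-convergence justification for infinite response spaces and your assumption that both classes have positive base probability are sensible additions that the paper leaves implicit.
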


The self-reward gap increases when correct sequences have higher
conditional log-probability variance than incorrect sequences, and
decreases under the reverse ordering. Since the sign of
$dP(\alpha)/d\alpha$ is determined by $\delta(\alpha)$ 
for binary $\rstar$,
a larger conditional variance in the correct class
moves the gap toward more positive values and thereby shifts the local effect of sharpening toward improvement. 
Besides, we can differentiate the first-order identity, yielding the second-order characterization of $d^2P/d\alpha^2$ in Appendix \ref{more results}.

While extending the above local optimality analysis to a characterization of the globally optimal $\alpha^* \in [1, \infty)$ is challenging, we can still gain insight into how $P(\alpha)$ evolves with $\alpha$. For binary $r^\star$, the sign of $dP(\alpha)/d\alpha$ is determined by the sign of $\delta(\alpha)$. Hence, examining $\delta(\alpha)$ at the two extremes, $\alpha=1$ and $\alpha\to\infty$, provides useful information about the global behavior of $P(\alpha)$. The following theorem shows that $\delta(\infty)$ admits a closed-form expression.

\begin{theorem}[Limit of the self-reward gap]
\label{thm:asymptotic_gap}
As $\alpha\to\infty$, the self-reward gap $\Delta_s$ satisfies
\begin{equation}
\Delta_s
\triangleq
\lim_{\alpha\to\infty}\delta(\alpha)
=
s_C-s_I,
\label{eq:asymptotic_gap}
\end{equation}
where $s_C = \max_{y\in\mathcal{Y}_C} r_{\mathrm{self}}(y)$, and $s_I = \max_{y\in\mathcal{Y}_I} r_{\mathrm{self}}(y)$.
\end{theorem}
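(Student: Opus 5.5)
The plan is to compute each conditional mean $\muC(\alpha)$ and $\muI(\alpha)$ directly from the definition of $\pip$ and show that each one converges to the maximal self-reward in its own class. Conditioning $\pip$ on $\calY_C$ gives
\begin{equation*}
\pip(y\mid \rstar(Y)=1)=\frac{p_\theta(y\mid x)^\alpha}{\sum_{y'\in\calY_C}p_\theta(y'\mid x)^\alpha}=\frac{e^{\alpha \rself(y)}}{\sum_{y'\in\calY_C}e^{\alpha\rself(y')}},\qquad y\in\calY_C .
\end{equation*}
This is a Gibbs (softmax) distribution on $\calY_C$ with inverse temperature $\alpha$ and energy $\rself$, and $\muC(\alpha)$ is the mean of $\rself$ under it. The same holds for $\calY_I$. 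Two standing assumptions are needed: both classes have positive base probability, so the conditionals are defined, and the response space is finite. Finiteness is automatic if the length is capped at $T$ over a finite vocabulary. It also guarantees that $s_C$ and $s_I$ are attained maxima, as the statement implicitly presumes.

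Fix the class $\calY_C$ and let $M_C=\{y\in\calY_C:\rself(y)=s_C\}$ be its set of maximizers, which is nonempty. Set $g=s_C-\max\{\rself(y):y\in\calY_C\setminus M_C\}>0$, the gap to the second-best value, whenever that set is nonempty. Dividing numerator and denominator by $e^{\alpha s_C}$, each non-maximizer has conditional weight at most $e^{-\alpha g}/|M_C|$. Hence the total conditional mass off $M_C$ is at most $|\calY_C|e^{-\alpha g}\to 0$. Splitting the mean over $M_C$ and its complement gives
\begin{equation*}
|\muC(\alpha)-s_C|\le |\calY_C|\,e^{-\alpha g}\,\max_{y\in\calY_C}|\rself(y)-s_C|\to 0 .
\end{equation*}
Every $\rself(y)$ is finite, so the last factor is finite, provided we restrict to responses with $p_\theta>0$. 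Responses with $p_\theta=0$ carry zero mass under every $\pip$ and can be discarded. An identical argument gives $\muI(\alpha)\to s_I$. Subtracting the two limits yields $\Deltas=s_C-s_I$.

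An alternative route goes through the log-partition function. Let $A_C(\alpha)=\log\sum_{y\in\calY_C}e^{\alpha\rself(y)}$. Then $\muC(\alpha)=A_C'(\alpha)$, and Lemma~\ref{lem:gap_dynamics}'s computation shows $A_C''=\sigCsq\ge 0$. Since $A_C(\alpha)/\alpha\to s_C$ and $A_C'$ is monotone, $A_C'\to s_C$ follows. I would still use the direct estimate as the main argument, since it is cleaner.

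The main obstacle is the infinite response space. If there is no length cap, the supremum need not be attained, and the tail mass bound $|\calY_C|e^{-\alpha g}$ fails. For that case I would try a truncation argument instead. Given $\epsilon>0$, split $\calY_C$ into $\{\rself\ge s_C-\epsilon\}$ and its complement. The complement has total weight at most $e^{\alpha(s_C-\epsilon)}\sum_{y} p_\theta(y\mid x)^{\alpha}$ scaled against $e^{\alpha s_C}$ from a near-maximizer. This works because $\sum_y p_\theta(y\mid x)^\alpha\le 1$ for $\alpha\ge1$. Uniform integrability of $\rself$ under the conditional laws then handles the lower tail. This is where care is needed, but in the paper's setting the finite case suffices.
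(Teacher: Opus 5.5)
Your proposal is correct and takes essentially the same route as the paper: both treat the class-conditional law of $\pip$ as a softmax in $\alpha\rself$ that concentrates on the class maximizers, which gives $\muC(\alpha)\to s_C$ and $\muI(\alpha)\to s_I$. The paper only shows that pairwise ratios $e^{\alpha[\rself(y)-\rself(y')]}$ vanish and then asserts concentration. Your explicit bound $|\muC(\alpha)-s_C|\le|\calY_C|e^{-\alpha g}\max_{y}|\rself(y)-s_C|$ makes that step rigorous and surfaces the finiteness and positive-class-mass assumptions the paper leaves implicit.
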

The signs of $\delta(1)$ and $\Delta_s$ reveal the following four cases.
If both are positive, increasing $\alpha$ improves performance near $\alpha=1$ and again for all sufficiently large $\alpha$, although $\delta(\alpha)$ may still change sign at intermediate values, causing $dP(\alpha)/d\alpha$ to change sign as well. If $\delta(1)>0$ but $\Delta_s<0$, increasing $\alpha$ initially improves performance but eventually becomes harmful. If $\delta(1)<0$ but $\Delta_s>0$, increasing $\alpha$ is initially harmful but eventually becomes beneficial. If both are negative, increasing $\alpha$ is harmful near $\alpha=1$ and again for sufficiently large $\alpha$. 

\subsection{
Extensions to Group-Relative RL
}
\label{subsec:rl_connection}

Going beyond test-time reasoning, another related sampling scenario is group-relative RL, where multiple responses to the same query are scored and compared to construct relative advantages for policy optimization, as in GRPO~\citep{shao2024deepseekmath}. We next show that replacing such standard rollouts with query-adaptive power sampling can also be beneficial. This follows from the same analytical framework with a modified objective, since RL requires informative reward variation among sampled responses~\citep{razin2026makes}, rather than simply maximizing correctness.

Precisely,
for a group of $K$ responses independently sampled from $\pi_\alpha(y\mid x)$ in group-relative RL, let $R_i$ denote the reward of the $i$-th response and
$\bar R=K^{-1}\sum_{i=1}^K R_i$. For binary correctness rewards, we define $C_K(\alpha)=\mathbb{E}\left[\frac{1}{K}\sum_{i=1}^K(R_i-\bar R)^2\right]$ denoting the expected within-group reward variation of $K$ sampled responses. Since $R_i$ is a Bernoulli random variable with success probability
$P(\alpha)$, $C_K(\alpha)$ can be rewritten as 
\begin{equation}
C_K(\alpha)
=
\frac{K-1}{K}P(\alpha)\bigl(1-P(\alpha)\bigr).
\label{eq:ck}
\end{equation}
A larger $C_K(\alpha)$ means more variation in the rewards within a group,
making the relative advantages among sampled responses easier to distinguish
and yielding a clearer signal for policy optimization. Differentiating \eqref{eq:ck} and applying
Corollary~\ref{cor:gap_decomposition} gives the following theorem.

\begin{theorem}
\label{theo3}
For the expected within-group reward variation $C_K(\alpha)$ defined above, we have
\begin{align}
\frac{d\mathcal{C}_{K}(\alpha)}{d\alpha}
&=
\frac{K-1}{K}
\left(1-2P(\alpha)\right)
\frac{dP(\alpha)}{d\alpha}
=
\frac{K-1}{K}
\left(1-2P(\alpha)\right)
P(\alpha)
\left(1-P(\alpha)\right)
\delta(\alpha).
\label{eq:dCK}
\end{align}
\end{theorem}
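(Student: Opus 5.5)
The plan is to differentiate the closed form \eqref{eq:ck} directly and then substitute Corollary~\ref{cor:gap_decomposition}. No new analysis of $\pip$ is needed.

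The first step is to justify \eqref{eq:ck}, since the theorem uses it. The $R_i$ are i.i.d.\ Bernoulli with parameter $P(\alpha)$. The within-group sum of squares is $\sum_i (R_i-\bar R)^2 = \sum_i R_i^2 - K\bar R^2$, and since $R_i^2=R_i$ this equals $\sum_i R_i - K\bar R^2$. Taking expectations, $\E[\sum_i R_i]=KP$ and
\begin{equation*}
\E[\bar R^2]=\Var(\bar R)+P^2=\frac{P(1-P)}{K}+P^2 .
\end{equation*}
Hence
\begin{equation*}
\E\Big[\tfrac1K\textstyle\sum_i (R_i-\bar R)^2\Big]=P-P^2-\frac{P(1-P)}{K}=\frac{K-1}{K}P(1-P),
\end{equation*}
which recovers \eqref{eq:ck}.

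Next, $C_K$ is a fixed polynomial $g(P)=\frac{K-1}{K}(P-P^2)$ composed with $P(\alpha)$. The chain rule gives $\frac{dC_K}{d\alpha}=\frac{K-1}{K}(1-2P(\alpha))\frac{dP}{d\alpha}$, which is the first equality in \eqref{eq:dCK}. Substituting $\frac{dP}{d\alpha}=P(\alpha)(1-P(\alpha))\deltasg(\alpha)$ from Corollary~\ref{cor:gap_decomposition} then gives the second equality.

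The only point needing care is that $P(\alpha)$ is differentiable, so that the chain rule applies. This is already implicit in Theorem~\ref{thm:sharpening_derivative}. On a finite response space, $P(\alpha)=\sum_{y\in\calY_C}p_\theta(y)^\alpha\big/\sum_y p_\theta(y)^\alpha$ is a ratio of smooth functions with a positive denominator, so it is smooth. For a countable response space, one would appeal to whatever dominated-differentiation assumption underlies Theorem~\ref{thm:sharpening_derivative}. Beyond this, the argument is a routine computation with no real obstacle.
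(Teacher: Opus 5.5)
Your proposal is correct and follows the paper's argument: the paper also differentiates the closed form $C_K(\alpha)=\frac{K-1}{K}P(\alpha)\bigl(1-P(\alpha)\bigr)$ by the chain rule and then substitutes Corollary~\ref{cor:gap_decomposition}. Your explicit derivation of that closed form from the i.i.d.\ Bernoulli structure, which the paper only asserts, is correct, as is your remark that $P(\alpha)$ is smooth.
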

Theorem~\ref{theo3} reveals how adapting the sharpening exponent $\alpha$ affects correctness probability and within-group reward variation jointly. When $P(\alpha)<1/2$, an adaptation of $\alpha$ that increases $P(\alpha)$
also increases $C_K(\alpha)$. In contrast, once $P(\alpha)>1/2$, the same improvement in $P(\alpha)$ is accompanied by a decrease in $C_K(\alpha)$. According to \eqref{eq:ck}, $\mathcal{C}_{K}(\alpha)$ is maximized at $P(\alpha)=1/2$, which provides a simple criterion for adapting the sharpening strength in group-relative RL. We therefore adapt $\alpha$ to maximize $C_K(\alpha)$, or equivalently, to make the correctness probability close to $1/2$. The detailed procedure and experiments are presented in Appendix~\ref{sec:rl-math500}.

Lastly, we develop a unified framework that subsumes both the test-time objective $P(\alpha)$ and the within-group reward variation $C_K(\alpha)$; see Appendix~\ref{more results}. More broadly, this framework provides a recipe for adapting the sharpening strength in other power sampling-based settings. 
\section{Adaptive Power Sampling}
\label{sec:aps}

This section translates the theoretical insights into practical algorithms.

\subsection{Estimating the Sharpening Direction}

Theorem~\ref{thm:sharpening_derivative} and Corollary~\ref{cor:gap_decomposition} show that the sharpening exponent $\alpha$ can be adjusted to improve $P(\alpha)$ according to 
the alignment between true reward and model's self-reward.
In practice, however, the true task reward is unavailable at test time.
To address this challenge, we develop an efficient algorithm, \emph{Adaptive Power Sampling} (APS), which uses self-consistency among sampled answers as a surrogate signal to adaptively adjust $\alpha$ during inference.

APS proceeds iteratively. At iteration $t$, given the current exponent $\alpha_t$, suppose that $N_t \geq 2$ responses $\{y_{i,t}\}_{i=1}^{N_t}$ are available. For simplicity, we omit the iteration index $t$ and write $y_i$. For each response $y_i$, we compute its self-reward as
\begin{equation}
    s_i = r_{\mathrm{self}}(y_i) = \log p_\theta(y_i\mid x).
\end{equation}
To reduce variations in the scale of self-reward, we standardize the self-reward of each response as
\begin{equation} \label{definition of z_i}
z_i=\frac{s_i-\bar{s}_t}{\widehat{\sigma}_{s,t}} , 
\end{equation}
where $ \bar{s}_t = \frac{1}{N_t}\sum_{i=1}^{N_t}s_i$ and $\widehat{\sigma}_{s,t} = \left[ \frac{1}{N_t} \sum_{i=1}^{N_t}(s_i-\bar{s}_t)^2 \right]^{1/2}$.

Since the ground-truth correctness of each response is unavailable at test time, we design answer agreement as a surrogate signal. This is inspired by self-consistency methods \citep{wang2023self,aggarwal2023adaptiveconsistency} , where agreement across independently sampled reasoning
paths provides a useful indication of answer confidence and is often associated with answer correctness. 
By denoting $e_i$ as the normalized answer key of $y_i$, the
answer-agreement score of $y_i$ is defined as
\begin{equation} \label{definition of v_i}
v_i=\frac{\mathbf{1}[e_i\neq\mathrm{invalid}]}{N_t-1}
\sum_{j\neq i}\mathbf{1}[e_j=e_i],
\end{equation}
where $\mathbf{1}[\cdot]$ denotes the indicator function, which equals one when the condition inside the brackets is satisfied and zero otherwise. For a valid answer, $v_i$ measures the fraction of other sampled responses giving the same answer as $y_i$. For an invalid answer, $v_i=0$.
A larger $v_i$ indicates stronger support from the other samples and is treated as evidence that the answer $y_i$ is more likely to be correct.

Based on the answer-agreement scores and normalized self-rewards, we therefore define the following score for determining the sharpening direction, i.e., whether $\alpha$ should be increased or decreased:
\begin{align}
\widehat{g}_t
&=
\frac{1}{N_t}
\sum_{i=1}^{N_t}
(v_i-\bar{v}_t)z_i=
\frac{1}{N_t\widehat{\sigma}_{s,t}}
\sum_{i=1}^{N_t}
(v_i-\bar{v}_t)(s_i-\bar{s}_t).
\label{eq:sharpening_effect_surrogate}
\end{align}
where $\bar{v}_t =\frac{1}{N_t}\sum_{i=1}^{N_t}v_i$. The sign of $\hat{g}_t$ reflects whether answer agreement and self-reward vary in the same direction with $\hat{\sigma}_{s,t}>0$. In particular, $\hat{g}_t>0$ indicates that responses with stronger agreement tend to have higher self-reward, which is consistent with the favorable case identified in Section \ref{sec:aap}. We should therefore increase $\alpha$.
The magnitude of $\hat{g}_t$ further reflects the strength of this relationship and is used to determine the amount by which $\alpha$ is adjusted.
If $\hat{\sigma}_{s,t}=0$, the sampled responses have identical self-reward, and we set $\hat{g}_t=0$. When all agreement scores are identical, the samples provide no preferred sharpening direction; APS keeps $\alpha_t$ unchanged and collects additional samples. 

\subsection{Adaptive Exponent Update}
Given the estimate $\widehat{g}_t$, APS updates the sharpening exponent via gradient ascent strategy:
\begin{equation} \label{update of alpha}
{\alpha}_{t+1}
=
\operatorname{clip}
\left(
\alpha_t+\eta\widehat{g}_t,
\alpha_{\min},
\alpha_{\max}
\right),
\end{equation}
where $\eta$ is the stepsize and $\operatorname{clip}(\cdot)$ ensures that the updated exponent remains within 
the appropriate range $[\alpha_{\min},\alpha_{\max}]$. 
The exponent adaptation is performed under both sampling-quality and budget constraints. At iteration $t$, we monitor the batch-level MCMC acceptance rate $\widehat{q}_t=A_t/P_t$, where $A_t$ and $P_t$ denote the numbers of accepted and total Metropolis--Hastings (MH) proposals used to generate the current batch at $\alpha_t$, respectively. We use $\widehat{q}_t$ as a sampler-quality diagnostic~\citep{karan2025reasoning}.
If the optional quality threshold $q_{\min}$ is enabled and $\widehat{q}_t<q_{\min}$, the adaptive search is terminated. Let $C$ denote the total sample budget and $c$ the number of samples generated so far. The adaptive search continues only when the remaining budget is sufficient to generate another batch of $K$ samples. 
The adaptive search also terminates if either $|\widehat{g}_t|\leq\epsilon$, indicating insufficient evidence for changing $\alpha_t$, or $\alpha_{t+1}=\alpha_t$, indicating that the proposed update cannot move the exponent. 

APS retains the last exponent passing the sampling-quality check as $\alpha^\star$, with $\alpha^\star=1$ as the initial fallback. After adaptation, the remaining budget is spent at $\alpha^\star$, and samples collected at this exponent are pooled for majority voting over valid answers, breaking ties by the mean self-reward of supporting responses. Algorithm~\ref{alg:aps_core} summarizes the procedure. APS adjusts the sharpening strength for each query through the estimated alignment between answer agreement and self-reward: its sign determines whether to increase $\alpha$, while its magnitude determines the update size, without requiring ground-truth answers.

\begin{algorithm}[t]
\caption{Adaptive Power Sampling (APS)}
\label{alg:aps_core}
\begin{algorithmic}[1]
\REQUIRE Prompt $x$, base model $p_\theta$, $\alpha_1$, $\alpha_{\min}$, $\alpha_{\max}$, $K$, $C$, $\eta$, $q_{\min}$, $\epsilon$
\STATE Initialize $c\leftarrow0$, $\alpha^\star\leftarrow1$, and $\mathcal{B}_{\alpha}\leftarrow\emptyset$
\FOR{$t=1,2,\ldots$ while $c+K\leq C$}
    \STATE Draw $K$ sequence-level MCMC samples from $p_\theta(y|x)^{\alpha_t}$; set $c\leftarrow c+K$
    \STATE For each $y_i$, extract $e_i$, $s_i$, and display answer $d_i$
    \STATE Estimate sampler quality $\widehat{q}_t$ from the MCMC acceptance rate
    \IF{$q_{\min}$ is enabled and $\widehat{q}_t<q_{\min}$}
        \STATE \textbf{break}
    \ENDIF
    \STATE Add the current samples to $\mathcal{B}_{\alpha_t}$, set $\alpha^\star\leftarrow\alpha_t$ and $N_t\leftarrow|\mathcal{B}_{\alpha_t}|$, and compute $ v_i $ via \eqref{definition of v_i}

\IF{$v_i=1$ for all $i=1,\ldots,N_t$}
    \STATE \textbf{break}
\ELSIF{$v_i=\bar{v}_t$ for all $i=1,\ldots,N_t$}
    \STATE Set $\alpha_{t+1}\leftarrow\alpha_t$ and \textbf{continue}
\ENDIF
    \STATE Compute $z_i$ via \eqref{definition of z_i}; if $\hat{\sigma}_{s,t}=0$, set $z_i\leftarrow0$
    \STATE Estimate $ \widehat{g}_t$ via \eqref{eq:sharpening_effect_surrogate}, and set ${\alpha}_{t+1}$ via \eqref{update of alpha}
    \IF{$|\widehat{g}_t|\leq\epsilon$ or ${\alpha}_{t+1}=\alpha_t$ or $c+K>C$}
        \STATE \textbf{break}
    \ENDIF
\ENDFOR
\STATE Use the remaining budget to sample at $\alpha^\star$ and add the samples to $\mathcal{B}_{\alpha^\star}$; select the most frequent valid answer, breaking ties by the highest mean self-reward among its supporting samples
\RETURN The corresponding display answer, or an empty answer if all samples are invalid
\end{algorithmic}
\end{algorithm}

\section{Experiments}
\label{sec:experiments}

\subsection{Experimental Setup}
\paragraph{Models.} We evaluate APS on four models: Qwen2.5-7B~\citep{yang2024qwen25},
Qwen2.5-Math-7B~\citep{yang2024qwen25math},
DeepSeek-Math-7B~\citep{shao2024deepseekmath}, and
DeepSeek-Math-7B-RL~\citep{shao2024deepseekmath}.
Qwen2.5-7B is a general-purpose model, while Qwen2.5-Math-7B and DeepSeek-Math-7B are math-focused models.

\paragraph{Benchmarks and Baselines.} We use three reasoning benchmarks. MATH500~\citep{lightman2023lets}
evaluates multi-step mathematical reasoning through correctness of the final answer.
HumanEval~\citep{chen2021evaluating} contains 164 Python programming tasks
evaluated by functional correctness. GPQA Diamond~\citep{rein2024gpqa}
contains 198 expert-level multiple-choice questions in physics, chemistry,
and biology. For the compared methods, base decoding denotes standard
autoregressive sampling from the original model distribution, while
low-temperature applies token-level temperature scaling with
$T=1/\alpha$ \citep{wang2020contextual}. The sequence-level sampling baselines include several methods that use a fixed sharpening exponent $\alpha$ with different sampling strategies, such as power sampling \citep{karan2025reasoning}, scalable power sampling (SPS) \citep{ji2026scalable}, power sampling with sequential Monte Carlo (Power-SMC) \citep{azizi2026powersmc}, and auxiliary particle power sampling (APPS) \citep{nguyen2026future}. 
We set $\alpha=4$ for all the fixed-exponent baselines. More implementation details are provided in Appendix \ref{app:evaluation_details}.

\subsection{Main Results}

\paragraph{Pass@1 Performance.}
Table~\ref{tab:main_results} summarizes the pass@1 accuracy results on MATH500, HumanEval, and GPQA. GRPO (MATH) comes from \citet{shao2025spurious} for Qwen and DeepSeek-Math-7B-RL for DeepSeek \citet{shao2024deepseekmath}.
Compared with base decoding, low-temperature sampling generally yields performance improvements,
showing that sharpening the next-token distribution at each decoding step can already improve reasoning performance. The sequence-level sampling baselines with fixed sharpening exponent, including Power Sampling, SPS, Power-SMC, and APPS, generally outperform low-temperature sampling,
suggesting that sequence-level distribution sharpening can provide additional
gains beyond token-level temperature scaling. More importantly, APS achieves the best performance among all baselines in most cases. 
These results further validate the query-wise variation observed in Section \ref{motiexample}, where different queries favor different sharpening strengths. Unlike these fixed-exponent methods,
APS adjusts $\alpha$ according to the sampled responses and hence adapts the sharpening strength to the current query.  By adapting $\alpha$ online, APS could exploit the reasoning capability already contained in the base model more effectively.

We further compare APS with GRPO, which is used to post-train the model on MATH.
Although GRPO
achieves strong performance on the in-domain MATH500 benchmark, its gains are
less consistent on HumanEval and GPQA. In contrast, APS requires no additional
training and achieves consistently strong performance across all three
benchmarks. These results show that adaptive inference-time sharpening can
effectively improve reasoning performance across different tasks without
updating the model parameters or relying on an external reward for post-training.

\begin{figure}[H]
  \centering
  \includegraphics[width=0.98\linewidth]{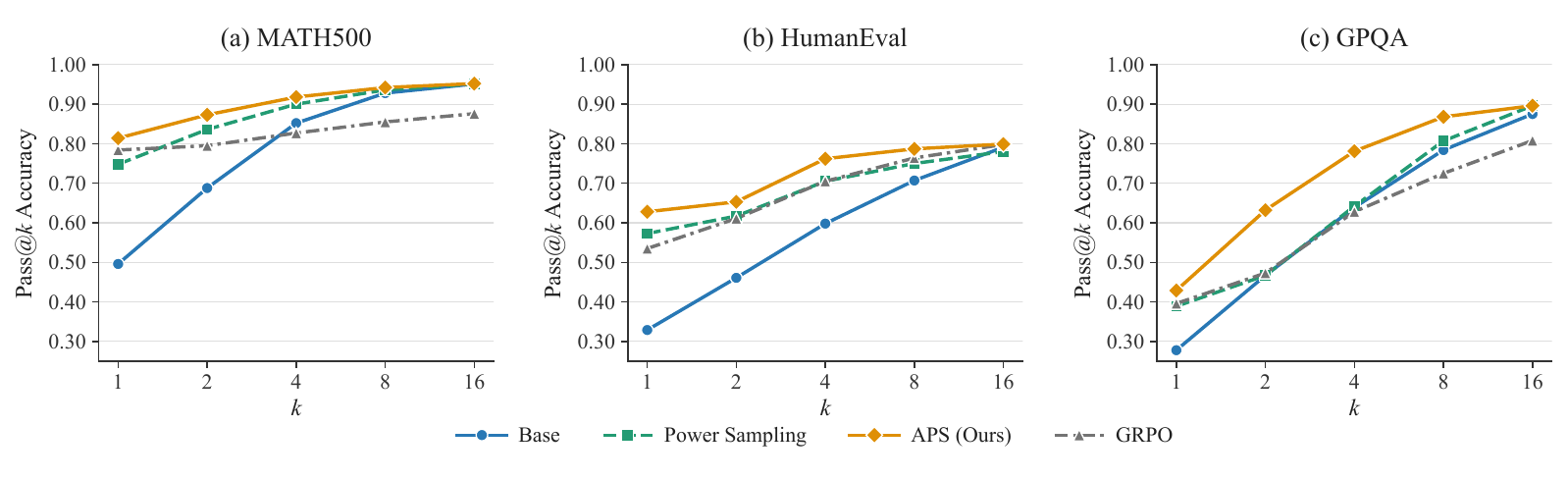}
  \caption{Pass@$k$ accuracy of Qwen2.5-Math-7B on MATH500, HumanEval, and GPQA.
  }
  \label{fig:passk_maths}
\end{figure}
\begin{table}[!htbp]
\caption{Pass@$1$ accuracy on MATH500 (M), HumanEval (H), and GPQA (G).}
\label{tab:main_results}
\centering
\normalsize
\setlength{\tabcolsep}{1.5pt}
\renewcommand{\arraystretch}{1.0}
\begin{tabular*}{\linewidth}{@{\extracolsep{\fill}}l*{9}{c}@{}}
\toprule
\textbf{Method}
& \multicolumn{3}{c}{\textbf{Qwen2.5-7B}}
& \multicolumn{3}{c}{\textbf{Qwen2.5-Math-7B}}
& \multicolumn{3}{c}{\textbf{DeepSeek-Math-7B}} \\
\cmidrule(lr){2-4}\cmidrule(lr){5-7}\cmidrule(l){8-10}
& \textbf{M} & \textbf{H} & \textbf{G}
& \textbf{M} & \textbf{H} & \textbf{G}
& \textbf{M} & \textbf{H} & \textbf{G} \\
\midrule
Base
& $0.498$ & $0.329$ & $0.278$
& $0.496$ & $0.329$ & $0.278$
& $0.362$ & $0.415$ & $0.333$ \\
Low-temperature
& $0.628$ & $0.524$ & $0.303$
& $0.690$ & $0.512$ & $0.353$
& $0.366$ & $0.427$ & $0.430$ \\
Power Sampling
& $0.706$ & $0.622$ & $0.318$
& $0.748$ & $0.573$ & $0.389$
& $0.424$ & $0.470$ & $0.345$ \\
SPS
& $0.708$ & $0.756$ & $0.349$
& $0.758$ & $0.604$ & $0.409$
& $0.464$ & $0.487$ & $0.364$ \\
Power-SMC
& $0.714$ & $0.761$ & $0.351$
& $0.762$ & $0.612$ & $0.413$
& $0.467$ & $0.498$ & $0.386$ \\
APPS
& $0.736$ & $0.610$ & $0.379$
& $0.782$ & $0.616$ & $0.429$
& $0.462$ & $0.512$ & $0.323$ \\
\textbf{APS (Ours)}
& $\underline{\textbf{0.734}}$ & $\underline{\textbf{0.732}}$ & $\underline{\textbf{0.369}}$
& $\underline{\textbf{0.814}}$ & $\underline{\textbf{0.628}}$ & $\underline{\textbf{0.429}}$
& $\underline{\textbf{0.472}}$ & $\underline{\textbf{0.518}}$ & $\underline{\textbf{0.389}}$ \\
\midrule
\emph{GRPO (MATH)}
& $\textbf{0.740}$ & $\textbf{0.561}$ & $\textbf{0.354}$
& $\textbf{0.785}$ & $\textbf{0.537}$ & $\textbf{0.399}$
& $\textbf{0.492}$ & $\textbf{0.524}$ & $\textbf{0.333}$ \\
\bottomrule
\end{tabular*}
\end{table}

\paragraph{APS on Post-Trained LLMs.}
Table~\ref{tab:aps_post_trained} evaluates whether APS remains effective when applied to an RL post-trained model.  On DeepSeek-Math-7B-RL, low-temperature sampling does not consistently improve the reasoning performance, while fixed-exponent power sampling yields only modest gains. In contrast,  APS consistently improves performance
across all three benchmarks, with particularly strong gains on MATH500 ($0.492\rightarrow0.549$) and HumanEval ($0.524\rightarrow0.598$). These results indicate that RL post-training does not eliminate query-level variation in the preferred sharpening strength, further demonstrating the merits of adaptive
rather than fixed power sampling.

\begin{table}[H]
\caption{Pass@$1$ accuracy of DeepSeek-Math-7B-RL on MATH500, HumanEval, and GPQA.}
\label{tab:aps_post_trained}
\centering
\normalsize
\setlength{\tabcolsep}{3.5pt}
\renewcommand{\arraystretch}{1.0}
\begin{tabular*}{0.95\linewidth}{@{\extracolsep{\fill}}lcccc@{}}
\toprule
\textbf{Benchmark} & \textbf{Base} & \textbf{Low-temperature}
& \textbf{Power Sampling} & \textbf{APS (ours)} \\
\midrule
MATH500   & $0.492$ & $0.412$ & $0.494$ & $\textbf{0.549}$ \\
HumanEval & $0.524$ & $0.524$ & $0.530$ & $\textbf{0.598}$ \\
GPQA      & $0.333$ & $0.303$ & $0.349$ & $\textbf{0.359}$ \\
\bottomrule
\end{tabular*}
\end{table}

\paragraph{Pass@$k$ Performance.}

To examine how performance scales with additional test-time samples, we evaluate Pass@$k$ for $k\in\{1,2,4,8,16\}$, where a problem is considered
solved if at least one of the $k$ independently sampled responses is correct. Figure~\ref{fig:passk_maths} presents the results on MATH500, HumanEval, and GPQA with Qwen2.5-Math-7B. Across all benchmarks, APS achieves strong Pass@$k$ performance, with its advantage being most pronounced for small $k$.  As $k$ increases, the performance gaps among the training-free sampling methods gradually narrow, since drawing more samples increases the chance of covering a correct response. Nevertheless, APS remains competitive across the full range of $k$, indicating that adaptive sharpening improves single-sample performance without substantially sacrificing multi-sample coverage. In contrast, GRPO exhibits relatively limited improvement as $k$ increases,
suggesting that post-training may concentrate the output distribution and reduce the benefit of drawing additional test-time samples. 
{Meanwhile, please refer to Appendix \ref{sec:rl-math500} and \ref{app:evaluation_details} for the results of adaptive power sharpening in group RL, MH-step sensitivity, effect of sample budgets, score distributions, and qualitative examples.}

\section{Conclusion}
\label{sec:conclusion}

This work proposes \emph{Adaptive Power Sampling} (APS) to equip power sampling for LLM reasoning with query-level adaptivity. Motivated by the empirical observation that the preferred sharpening strength is not universal across queries, we theoretically characterize how the effect of increasing the sharpening strength depends on the self-reward gap between correct and incorrect responses. We empirically show that this gap can be estimated using the correlation between answer agreement and the model's self-reward. Together, this theoretical characterization and practical adaptation enable APS to consistently improve test-time reasoning across diverse models and datasets, achieving consistent performance gain over power sampling with a fixed sharpening exponent, without additional training.

\subsection*{Reproducibility statement}
Proofs of the theoretical results are provided in
Appendices~\ref{app:proofs} and \ref{more results}.
The models, benchmarks, and experimental settings are described
in Section~\ref{sec:experiments}, with additional sampling and
evaluation details in Appendix~\ref{app:evaluation_details}.
An anonymized implementation of APS and the evaluation scripts
are included in the supplementary material.

\subsection*{AI use statement}

We used generative AI tools to assist with proofs, writing evaluation code, extracting illustrative cases from experimental outputs, polishing manuscript prose, and checking and summarizing cited literature. The authors verified the AI-assisted mathematical arguments, reviewed the code and text, checked the illustrative cases against the experimental outputs, and checked the literature. The authors take responsibility for the final content of this work.

\bibliographystyle{plainnat}
\bibliography{references_main4}

@inproceedings{karan2025reasoning,
  title     = {Reasoning with Sampling: Your Base Model is Smarter Than You Think},
  author    = {Karan, Aayush and Du, Yilun},
  booktitle = {International Conference on Learning Representations},
  year      = {2026}
}

@article{guo2026egps,
  title={Sample Where You Struggle: Sharpening Base Model Reasoning via Entropy-Guided Power Sampling},
  author={Guo, Hong and Guo, Nianhui and Meinel, Christoph and Yang, Haojin},
  journal={arXiv preprint arXiv:2606.09926},
  year={2026}
}

@article{ji2026scalable,
  title={Scalable Power Sampling: Unlocking Efficient, Training-Free Reasoning for {LLMs} via Distribution Sharpening},
  author={Ji, Xiaotong and Tutunov, Rasul and Zimmer, Matthieu and Ammar, Haitham Bou},
  journal={arXiv preprint arXiv:2601.21590},
  year={2026}
}

@article{azizi2026powersmc,
  title={Power-{SMC}: Low-Latency Sequence-Level Power Sampling for Training-Free {LLM} Reasoning},
  author={Azizi, Seyedarmin and Baghaei Potraghloo, Erfan and Ahmadi, Minoo and Kundu, Souvik and Pedram, Massoud},
  journal={arXiv preprint arXiv:2602.10273},
  year={2026}
}

@article{tomihari2026bridges,
  title={Power Distribution Bridges Sampling, Self-Reward {RL}, and Self-Distillation},
  author={Tomihari, Akiyoshi and Sato, Issei},
  journal={arXiv preprint arXiv:2605.04542},
  year={2026}
}

@inproceedings{he2025rewarding,
  title={Rewarding the Unlikely: Lifting {GRPO} Beyond Distribution Sharpening},
  author={He, Andre Wang and Fried, Daniel and Welleck, Sean},
  booktitle={Proceedings of the 2025 Conference on Empirical Methods in Natural Language Processing (EMNLP)},
  year={2025}
}

@article{brown2024large,
  title   = {Large Language Monkeys: Scaling Inference Compute with Repeated Sampling},
  author  = {Brown, Bradley and Juravsky, Jordan and Ehrlich, Ryan and Clark, Ronald and Le, Quoc V. and R{\'e}, Christopher and Mirhoseini, Azalia},
  journal = {arXiv preprint arXiv:2407.21787},
  year    = {2024}
}

@inproceedings{yao2023tree,
  title={Tree of Thoughts: Deliberate Problem Solving with Large Language Models},
  author={Yao, Shunyu and Yu, Dian and Zhao, Jeffrey and Shafran, Izhak and Griffiths, Tom and Cao, Yuan and Narasimhan, Karthik},
  booktitle={Advances in Neural Information Processing Systems},
  year={2023}
}

@article{snell2024scaling,
  title={Scaling {LLM} Test-Time Compute Optimally can be More Effective than Scaling Model Parameters},
  author={Snell, Charlie and Lee, Jaehoon and Xu, Kelvin and Kumar, Aviral},
  journal={arXiv preprint arXiv:2408.03314},
  year={2024}
}

@inproceedings{ouyang2022training,
  title     = {Training Language Models to Follow Instructions with Human Feedback},
  author    = {Ouyang, Long and Wu, Jeff and Jiang, Xu and Almeida, Diogo
               and Wainwright, Carroll L. and Mishkin, Pamela and Zhang, Chong
               and Agarwal, Sandhini and Slama, Katarina and Ray, Alex
               and Schulman, John and Hilton, Jacob and Kelton, Fraser
               and Miller, Luke and Simens, Maddie and Askell, Amanda
               and Welinder, Peter and Christiano, Paul and Leike, Jan
               and Lowe, Ryan},
  booktitle = {Advances in Neural Information Processing Systems},
  volume    = {35},
  pages     = {27730--27744},
  year      = {2022}
}

@article{shao2024deepseekmath,
  title   = {{DeepSeekMath}: Pushing the Limits of Mathematical Reasoning
             in Open Language Models},
  author  = {Shao, Zhihong and Wang, Peiyi and Zhu, Qihao and Xu, Runxin
             and Song, Junxiao and Bi, Xiao and Zhang, Haowei
             and Zhang, Mingchuan and Li, Y. K. and Wu, Y.
             and Guo, Daya},
  journal = {arXiv preprint arXiv:2402.03300},
  year    = {2024}
}

@article{guo2025deepseekr1,
  title   = {{DeepSeek-R1}: Incentivizing Reasoning Capability in {LLMs}
             via Reinforcement Learning},
  author  = {{DeepSeek-AI}},
  journal = {arXiv preprint arXiv:2501.12948},
  year    = {2025}
}

@inproceedings{yu2025dapo,
  title     = {{DAPO}: An Open-Source {LLM} Reinforcement Learning System at Scale},
  author    = {Yu, Qiying and Zhang, Zheng and Zhu, Ruofei and Yuan, Yufeng and
               Zuo, Xiaochen and Yu, Yue and Dai, Weinan and Fan, Tiantian and
               Liu, Gaohong and Liu, Juncai and Liu, Lingjun and Liu, Xin and
               Lin, Haibin and Lin, Zhiqi and Ma, Bole and Sheng, Guangming and
               Tong, Yuxuan and Zhang, Chi and Zhang, Mofan and Zhang, Ru and
               Zhang, Wang and Zhu, Hang and Zhu, Jinhua and Chen, Jiaze and
               Chen, Jiangjie and Wang, Chengyi and Yu, Hongli and Song, Yuxuan and
               Wei, Xiangpeng and Zhou, Hao and Liu, Jingjing and Ma, Wei-Ying and
               Zhang, Ya-Qin and Yan, Lin and Wu, Yonghui and Wang, Mingxuan},
  booktitle = {Advances in Neural Information Processing Systems},
  volume    = {38},
  year      = {2025}
}

@inproceedings{yue2025rlvr,
  title     = {Does Reinforcement Learning Really Incentivize Reasoning Capacity in LLMs Beyond the Base Model?},
  author    = {Yue, Yang and Chen, Zhiqi and Lu, Rui and Zhao, Andrew and Wang, Zhaokai and Yue, Yang and Song, Shiji and Huang, Gao},
  booktitle = {Advances in Neural Information Processing Systems},
  volume    = {38},
  year      = {2025}
}

@article{shao2025spurious,
  title   = {Spurious Rewards: Rethinking Training Signals in {RLVR}},
  author  = {Shao, Rulin and Li, Shuyue Stella and Xin, Rui
             and Geng, Scott and Wang, Yiping and Oh, Sewoong
             and Du, Simon Shaolei and Lambert, Nathan and Min, Sewon
             and Krishna, Ranjay and Tsvetkov, Yulia
             and Hajishirzi, Hannaneh and Koh, Pang Wei
             and Zettlemoyer, Luke},
  journal = {arXiv preprint arXiv:2506.10947},
  year    = {2025}
}

@inproceedings{huang2024sharpening,
  title     = {Self-Improvement in Language Models: The Sharpening Mechanism},
  author    = {Huang, Audrey and Block, Adam and Foster, Dylan J. and Rohatgi, Dhruv and Zhang, Cyril and Simchowitz, Max and Ash, Jordan T. and Krishnamurthy, Akshay},
  booktitle = {International Conference on Learning Representations},
  year      = {2025}
}

@article{song2025outcome,
  title   = {Outcome-Based Exploration for {LLM} Reasoning},
  author  = {Song, Yuda and Kempe, Julia and Munos, Remi},
  journal = {arXiv preprint arXiv:2509.06941},
  year    = {2025}
}

@inproceedings{wang2023self,
  author    = {Wang, Xuezhi and Wei, Jason and Schuurmans, Dale and Le, Quoc V. and Chi, Ed H. and Narang, Sharan and Chowdhery, Aakanksha and Zhou, Denny},
  title     = {Self-Consistency Improves Chain of Thought Reasoning in Language Models},
  booktitle = {International Conference on Learning Representations},
  year      = {2023}
}

@article{li2022competition,
  author  = {Li, Yujia and Choi, David and Chung, Junyoung and Kushman, Nate and Schrittwieser, Julian and Leblond, R{\'e}mi and Eccles, Tom and Keeling, James and Gimeno, Felix and Dal Lago, Agustin and Hubert, Thomas and Choy, Peter and de Masson d'Autume, Cyprien and Babuschkin, Igor and Chen, Xinyun and Huang, Po-Sen and Welbl, Johannes and Gowal, Sven and Cherepanov, Alexey and Molloy, James and Mankowitz, Daniel J. and Sutherland Robson, Esme and Kohli, Pushmeet and de Freitas, Nando and Kavukcuoglu, Koray and Vinyals, Oriol},
  title   = {Competition-Level Code Generation with {AlphaCode}},
  journal = {Science},
  volume  = {378},
  number  = {6624},
  pages   = {1092--1097},
  year    = {2022},
  doi     = {10.1126/science.abq1158}
}

@article{cobbe2021training,
  author  = {Cobbe, Karl and Kosaraju, Vineet and Bavarian, Mohammad and Chen, Mark and Jun, Heewoo and Kaiser, Lukasz and Plappert, Matthias and Tworek, Jerry and Hilton, Jacob and Nakano, Reiichiro and Hesse, Christopher and Schulman, John},
  title   = {Training Verifiers to Solve Math Word Problems},
  journal = {arXiv preprint arXiv:2110.14168},
  year    = {2021}
}

@inproceedings{lightman2023lets,
  author    = {Lightman, Hunter and Kosaraju, Vineet and Burda, Yura and Edwards, Harri and Baker, Bowen and Lee, Teddy and Leike, Jan and Schulman, John and Sutskever, Ilya and Cobbe, Karl},
  title     = {Let's Verify Step by Step},
  booktitle = {International Conference on Learning Representations},
  year      = {2024}
}

@inproceedings{chow2024inference,
  title     = {Inference-Aware Fine-Tuning for Best-of-N Sampling in Large Language Models},
  author    = {Chow, Yinlam and Tennenholtz, Guy and Gur, Izzeddin and Zhuang, Vincent and Dai, Bo and Kumar, Aviral and Agarwal, Rishabh and Thiagarajan, Sridhar and Boutilier, Craig and Faust, Aleksandra},
  booktitle = {International Conference on Learning Representations},
  year      = {2025}
}

@inproceedings{xie2023self,
  author    = {Xie, Yuxi and Kawaguchi, Kenji and Zhao, Yiran and Zhao, James Xu and Kan, Min-Yen and He, Junxian and Xie, Michael},
  title     = {Self-Evaluation Guided Beam Search for Reasoning},
  booktitle = {Advances in Neural Information Processing Systems},
  volume    = {36},
  year      = {2023}
}

@inproceedings{zhou2024language,
  author    = {Zhou, Andy and Yan, Kai and Shlapentokh-Rothman, Michal and Wang, Haohan and Wang, Yu-Xiong},
  title     = {Language Agent Tree Search Unifies Reasoning, Acting, and Planning in Language Models},
  booktitle = {Proceedings of the 41st International Conference on Machine Learning},
  volume    = {235},
  pages     = {62138--62160},
  year      = {2024}
}

@article{liu2025understanding,
  author  = {Liu, Zichen and Chen, Changyu and Li, Wenjun and Qi, Penghui and Pang, Tianyu and Du, Chao and Lee, Wee Sun and Lin, Min},
  title   = {Understanding {R1}-Zero-Like Training: A Critical Perspective},
  journal = {arXiv preprint arXiv:2503.20783},
  year    = {2025}
}

@inproceedings{lin2025cppo,
  title     = {CPPO: Accelerating the Training of Group Relative Policy Optimization-Based Reasoning Models},
  author    = {Lin, Zhihang and Lin, Mingbao and Xie, Yuan and Ji, Rongrong},
  booktitle = {Advances in Neural Information Processing Systems},
  volume    = {38},
  year      = {2025}
}

@article{zhang2025survey,
  title   = {A Survey on Test-Time Scaling in Large Language Models: What, How, Where, and How Well?},
  author  = {Zhang, Qiyuan and Lyu, Fuyuan and Sun, Zexu and Wang, Lei and
             Zhang, Weixu and Hua, Wenyue and Wu, Haolun and Guo, Zhihan and
             Wang, Yufei and Muennighoff, Niklas and King, Irwin and
             Liu, Xue and Ma, Chen},
  journal = {arXiv preprint arXiv:2503.24235},
  year    = {2025}
}

@article{hariri2026testtime,
  author  = {Hariri, Mohsen and Chen, Weicong and Shahini, Nahal and Singh, Vikash
             and Ye, Kai and Samandar, Amirhossein and Ganguly, Debargha
             and Sankar, Sreehari and Zhang, Yanyan and Wang, Shouren
             and Peng, Jerry and Zhang, Biyao and Hinczewski, Michael
             and Chaudhary, Vipin},
  title   = {Test-Time Scaling in Reasoning {LLMs}:
             Inference Regimes, Evaluation, and Reproducibility},
  journal = {arXiv preprint arXiv:2608.04001},
  year    = {2026}
}

@article{nguyen2026future,
  author  = {Nguyen, Tu and Zimmer, Matthieu and Tutunov, Rasul
             and Ji, Xiaotong and Bou Ammar, Haitham},
  title   = {The Model Knows, the Decoder Finds:
             Future Value Guided Particle Power Sampling},
  journal = {arXiv preprint arXiv:2605.02427},
  year    = {2026}
}

@inproceedings{wang2026cache,
  author    = {Wang, Ke and Yu, Zehao and Wang, Luwei and Huang, Yongchao},
  title     = {Cache Coherent Resampling for Efficient Test Time Scaling
               in {LLM} Reasoning via Adaptive Sequential Monte Carlo},
  booktitle = {Proceedings of the 43rd International Conference
               on Machine Learning},
  year      = {2026}
}

@article{zhou2026cutting,
  author  = {Zhou, Felix and Mehrotra, Anay and Liu, Quanquan C.},
  title   = {Reasoning with Sampling: Cutting at Decision Points},
  journal = {arXiv preprint arXiv:2605.30327},
  year    = {2026}
}

@article{meng2026degs,
  title={Depth-Entropy Guided Sampling for Training-Free {LLM} Reasoning},
  author={Meng, Zibin and Xie, Peng and Chen, Kani},
  journal={arXiv preprint arXiv:2607.09693},
  year={2026}
}

@article{chen2021evaluating,
  title   = {Evaluating Large Language Models Trained on Code},
  author  = {Chen, Mark and Tworek, Jerry and Jun, Heewoo and Yuan, Qiming and Pinto, Henrique Ponde de Oliveira and Kaplan, Jared and Edwards, Harri and Burda, Yuri and Joseph, Nicholas and Brockman, Greg and Ray, Alex and Puri, Raul and Krueger, Gretchen and Petrov, Michael and Khlaaf, Heidy and Sastry, Girish and Mishkin, Pamela and Chan, Brooke and Gray, Scott and Ryder, Nick and Pavlov, Mikhail and Power, Alethea and Kaiser, Lukasz and Bavarian, Mohammad and Winter, Clemens and Tillet, Philippe and Such, Felipe Petroski and Cummings, Dave and Plappert, Matthias and Chantzis, Fotios and Barnes, Elizabeth and Herbert-Voss, Ariel and Guss, William Hebgen and Nichol, Alex and Paino, Alex and Tezak, Nikolas and Tang, Jie and Babuschkin, Igor and Balaji, Suchir and Jain, Shantanu and Saunders, William and Hesse, Christopher and Carr, Andrew N. and Leike, Jan and Achiam, Josh and Misra, Vedant and Morikawa, Evan and Radford, Alec and Knight, Matthew and Brundage, Miles and Murati, Mira and Mayer, Katie and Welinder, Peter and McGrew, Bob and Amodei, Dario and McCandlish, Sam and Sutskever, Ilya and Zaremba, Wojciech},
  journal = {arXiv preprint arXiv:2107.03374},
  year    = {2021}
}

@inproceedings{rein2024gpqa,
  title     = {{GPQA}: A Graduate-Level Google-Proof {Q\&A} Benchmark},
  author    = {Rein, David and Hou, Betty Li and Stickland, Asa Cooper and Petty, Jackson and Pang, Richard Yuanzhe and Dirani, Julien and Michael, Julian and Bowman, Samuel R.},
  booktitle = {First Conference on Language Modeling},
  year      = {2024}
}

@article{yang2024qwen25,
  title   = {Qwen2.5 Technical Report},
  author  = {Yang, An and Yang, Baosong and Zhang, Beichen and Hui, Binyuan
             and Zheng, Bo and Yu, Bowen and Li, Chengyuan and Liu, Dayiheng
             and Huang, Fei and Wei, Haoran and others},
  journal = {arXiv preprint arXiv:2412.15115},
  year    = {2024}
}

@article{yang2024qwen25math,
  title   = {Qwen2.5-Math Technical Report: Toward Mathematical Expert Model
             via Self-Improvement},
  author  = {Yang, An and Zhang, Beichen and Hui, Binyuan and Gao, Bofei
             and Yu, Bowen and Li, Chengpeng and Liu, Dayiheng and Tu, Jianhong
             and Zhou, Jingren and Lin, Junyang and Lu, Keming and Xue, Mingfeng
             and Lin, Runji and Liu, Tianyu and Ren, Xingzhang and Zhang, Zhenru},
  journal = {arXiv preprint arXiv:2409.12122},
  year    = {2024}
}

@inproceedings{aggarwal2023adaptiveconsistency,
  title     = {Let's Sample Step by Step: Adaptive-Consistency for Efficient Reasoning and Coding with {LLM}s},
  author    = {Aggarwal, Pranjal and Madaan, Aman and Yang, Yiming and {Mausam}},
  booktitle = {Proceedings of the 2023 Conference on Empirical Methods in Natural Language Processing},
  pages     = {12375--12396},
  year      = {2023}
}

@inproceedings{hu2022lora,
  title={LoRA: Low-Rank Adaptation of Large Language Models},
  author={Hu, Edward J. and Shen, Yelong and Wallis, Phillip and Allen-Zhu, Zeyuan and Li, Yuanzhi and Wang, Shean and Wang, Lu and Chen, Weizhu},
  booktitle={International Conference on Learning Representations},
  year={2022}
}

@article{wang2020contextual,
  title={Contextual Temperature for Language Modeling},
  author={Wang, Pei-Hsin and Hsieh, Sheng-Iou and Chang, Shih-Chieh and Chen, Yu-Ting and Pan, Jia-Yu and Wei, Wei and Juan, Da-Chang},
  journal={arXiv preprint arXiv:2012.13575},
  year={2020}
}

@inproceedings{hendrycksmath2021,
  title={Measuring Mathematical Problem Solving With the MATH Dataset},
  author={Dan Hendrycks and Collin Burns and Saurav Kadavath and Akul Arora and Steven Basart and Eric Tang and Dawn Song and Jacob Steinhardt},
  booktitle={Proceedings of the Neural Information Processing Systems Track on Datasets and Benchmarks},
  volume={1},
  year={2021}
}

@inproceedings{razin2026makes,
  title={What Makes a Reward Model a Good Teacher? An Optimization Perspective},
  author={Razin, Noam and Wang, Zixuan and Strauss, Hubert and Wei, Stanley and Lee, Jason D. and Arora, Sanjeev},
  booktitle={Advances in Neural Information Processing Systems},
  volume={38},
  pages={59162--59222},
  year={2025}
}

\clearpage
\appendix

\section{Proofs}
\label{app:proofs}

\subsection{Proof of Theorem~\ref{thm:sharpening_derivative}}
For a fixed prompt $x$, according to the definition of $P(\alpha)$ in \eqref{eq:performance}, we have
\begin{align}
\frac{dP(\alpha)}{d\alpha}
&=
\frac{d}{d\alpha}
\sum_y
r^\star(y)
\frac{p_\theta(y\mid x)^\alpha}{Z_\alpha} \notag
\\
&=
\sum_y
r^\star(y)
\frac{
r_{\mathrm{self}}(y)p_\theta(y\mid x)^\alpha Z_\alpha
-
p_\theta(y\mid x)^\alpha Z_\alpha'
}{
Z_\alpha^2
} \notag
\\
&=
\sum_y
r^\star(y)\pi_\alpha(y)
\left[
r_{\mathrm{self}}(y)
-
\frac{Z_\alpha'}{Z_\alpha}
\right]  \notag
\\
&=
\mathbb{E}_{\pi_\alpha}
\left[
r^\star(Y)r_{\mathrm{self}}(Y)
\right]
-
\mathbb{E}_{\pi_\alpha}
\left[
r^\star(Y)
\right]
\mathbb{E}_{\pi_\alpha}
\left[
r_{\mathrm{self}}(Y)
\right]  \notag
\\
&=
\operatorname{Cov}_{\pi_\alpha}
\left(
r^\star(Y),r_{\mathrm{self}}(Y)
\right)
\label{eq:performance_covariance}
\end{align}
where $Z_\alpha' = \sum_y
p_\theta(y\mid x)^\alpha
\log p_\theta(y\mid x)$. The proof is thus completed.

\subsection{Proof of Corollary~\ref{cor:gap_decomposition}}

For binary $\rstar$, $P(\alpha)=\Pr_{\pip}(\rstar(Y)=1)$. We then have 
\begin{align}
\operatorname{Cov}_{\pi_\alpha}
\left(
r^\star(Y),r_{\mathrm{self}}(Y)
\right) 
&=
\mathbb{E}_{\pi_\alpha}
\left[
r^\star(Y)r_{\mathrm{self}}(Y)
\right]
-
\mathbb{E}_{\pi_\alpha}
\left[
r^\star(Y)
\right]
\mathbb{E}_{\pi_\alpha}
\left[
r_{\mathrm{self}}(Y)
\right]
\notag \\
&=
P(\alpha)\mu_C(\alpha)
-
P(\alpha)
\mathbb{E}_{\pi_\alpha}
\left[
r_{\mathrm{self}}(Y)
\right]
\notag \\
&=
P(\alpha)
\left[
\mu_C(\alpha)
-
P(\alpha)\mu_C(\alpha)
-
\bigl(1-P(\alpha)\bigr)\mu_I(\alpha)
\right]
\notag \\
&=
P(\alpha)\bigl(1-P(\alpha)\bigr)
\left[
\mu_C(\alpha)-\mu_I(\alpha)
\right] = P(\alpha)\bigl(1-P(\alpha)\bigr)\delta(\alpha),
\label{eq:binary_covariance}
\end{align}
where the last step uses $\E_{\pip}[\rself(Y)] = P(\alpha)\muC(\alpha) + (1{-}P(\alpha))\muI(\alpha)$.

\subsection{Proof of Lemma~\ref{lem:gap_dynamics}}
For fixed $x$, define 
\begin{equation}
Z_C(\alpha)
=
\sum_{y\in\mathcal{Y}_C}
p_\theta(y\mid x)^\alpha,
\qquad
Z_I(\alpha)
=
\sum_{y\in\mathcal{Y}_I}
p_\theta(y\mid x)^\alpha.
\label{eq:class_normalizers}
\end{equation}

For $y\in\mathcal{Y}_C$, the distribution of correct responses under
$\pi_\alpha$ is denoted as 
\begin{equation}
\Pr_{\pi_\alpha}\!\left(Y=y\mid r^\star(Y)=1\right)
=
\frac{p_\theta(y\mid x)^\alpha}{Z_C(\alpha)}.
\end{equation}
It follows that
\begin{align}
\mu_C(\alpha)
&=
\frac{
\sum_{y\in\mathcal{Y}_C}
p_\theta(y\mid x)^\alpha r_{\mathrm{self}}(y)
}{
Z_C(\alpha)
}
=
\frac{Z_C'(\alpha)}{Z_C(\alpha)}
=
\frac{d}{d\alpha}\log Z_C(\alpha).
\label{eq:mu_c_log_partition}
\end{align}

By differentiating \eqref{eq:mu_c_log_partition}, we obtain
\begin{align}
\frac{d\mu_C(\alpha)}{d\alpha}
&=
\frac{
\sum_{y\in\mathcal{Y}_C}
p_\theta(y\mid x)^\alpha r_{\mathrm{self}}^2(y)
}{
Z_C(\alpha)
}
-
\left(
\frac{
\sum_{y\in\mathcal{Y}_C}
p_\theta(y\mid x)^\alpha r_{\mathrm{self}}(y)
}{
Z_C(\alpha)
}
\right)^2
\nonumber\\
&=
\mathbb{E}_{\pi_\alpha}
\left[
r_{\mathrm{self}}^2(Y)
\mid r^\star(Y)=1
\right]
-
\mu_C^2(\alpha)
=
\sigma_C^2(\alpha).
\label{eq:mu_c_derivative}
\end{align}

The same calculation for the incorrect responses gives
\begin{equation}
\frac{d\mu_I(\alpha)}{d\alpha}
=
\sigma_I^2(\alpha).
\label{eq:mu_i_derivative}
\end{equation}

Since
$\delta(\alpha)=\mu_C(\alpha)-\mu_I(\alpha)$, we have
\begin{equation}
\frac{d\delta(\alpha)}{d\alpha}
=
\sigma_C^2(\alpha)-\sigma_I^2(\alpha),
\end{equation}
which proves the lemma.

\subsection{Proof of Theorem~\ref{thm:asymptotic_gap}}

Taking any $y,y'\in\mathcal{Y}_C$ with
$r_{\mathrm{self}}(y)<r_{\mathrm{self}}(y')$ as an example, as $\alpha\to\infty$, their conditional
probability ratio satisfies
\begin{equation}
\frac{
\Pr_{\pi_\alpha}(Y=y\mid r^\star(Y)=1)
}{
\Pr_{\pi_\alpha}(Y=y'\mid r^\star(Y)=1)
}
=
\exp\left\{
\alpha
\left[
r_{\mathrm{self}}(y)-r_{\mathrm{self}}(y')
\right]
\right\}
\rightarrow 0.
\end{equation}
Thus, the conditional distribution over $\mathcal{Y}_C$ concentrates on the highest-self-reward responses in the correct class. The same argument applies to $\mathcal{Y}_I$. Hence,
\begin{equation}
    \lim_{\alpha\to\infty}\muC(\alpha)=s_C,
    \qquad
    \lim_{\alpha\to\infty}\muI(\alpha)=s_I,
\end{equation}
and therefore
\begin{equation}
    \lim_{\alpha\to\infty}\deltasg(\alpha)=s_C-s_I=\Deltas,
\end{equation}
which proves the theorem.
\section{Additional theoretical results}
\label{more results}
\subsection{Second-Order Effect of Sharpening}

\begin{proposition}
\label{thm:curvature}
The second derivative of the performance with respect to the sharpening exponent $\alpha$ is given by
\begin{equation}
    \frac{d^2P}{d\alpha^2}
    =
    P(\alpha)(1-P(\alpha))
    \left[
    (1-2P(\alpha))\deltasg(\alpha)^2
    +
    \sigCsq(\alpha)-\sigIsq(\alpha)
    \right].
    \label{eq:curvature}
\end{equation}
\end{proposition}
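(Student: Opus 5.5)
We differentiate the first-order identity of Corollary~\ref{cor:gap_decomposition} once more in $\alpha$, using the product rule. Write
\begin{equation*}
\frac{dP}{d\alpha}=P(\alpha)\bigl(1-P(\alpha)\bigr)\deltasg(\alpha)=:u(\alpha)\,\deltasg(\alpha),
\qquad u(\alpha)=P(\alpha)\bigl(1-P(\alpha)\bigr).
\end{equation*}
Then
\begin{equation*}
\frac{d^2P}{d\alpha^2}=u'(\alpha)\,\deltasg(\alpha)+u(\alpha)\,\deltasg'(\alpha).
\end{equation*}
Both terms can be evaluated with results already established.

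For the first term, the chain rule gives $u'(\alpha)=(1-2P(\alpha))\,dP/d\alpha$. Substituting Corollary~\ref{cor:gap_decomposition} again yields
\begin{equation*}
u'(\alpha)=(1-2P(\alpha))\,P(\alpha)\bigl(1-P(\alpha)\bigr)\,\deltasg(\alpha),
\end{equation*}
so
\begin{equation*}
u'(\alpha)\,\deltasg(\alpha)=P(\alpha)\bigl(1-P(\alpha)\bigr)(1-2P(\alpha))\,\deltasg(\alpha)^2.
\end{equation*}
For the second term, Lemma~\ref{lem:gap_dynamics} gives $\deltasg'(\alpha)=\sigCsq(\alpha)-\sigIsq(\alpha)$. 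Hence
\begin{equation*}
u(\alpha)\,\deltasg'(\alpha)=P(\alpha)\bigl(1-P(\alpha)\bigr)\bigl(\sigCsq(\alpha)-\sigIsq(\alpha)\bigr).
\end{equation*}
Factoring out $P(\alpha)(1-P(\alpha))$ from the sum gives exactly \eqref{eq:curvature}.

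The only step needing care is justifying the differentiations, in particular that $P$, $\muC$ and $\muI$ are differentiable in $\alpha$. We would treat this as we did implicitly in the proofs of Theorem~\ref{thm:sharpening_derivative} and Lemma~\ref{lem:gap_dynamics}. For a finite response space (bounded length over a finite vocabulary), $Z_C(\alpha)$ and $Z_I(\alpha)$ are finite sums of the smooth functions $e^{\alpha\rself(y)}$. They are positive whenever $\calY_C$ and $\calY_I$ are nonempty and contain responses with positive likelihood, so all quantities are smooth in $\alpha$. If one class has zero mass under $p_\theta$, then $P(\alpha)$ is constantly $0$ or $1$; both sides of \eqref{eq:curvature} then vanish, since the prefactor $P(1-P)=0$ and the conditional moments may be defined arbitrarily. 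For countably infinite response spaces, we would add a mild assumption, e.g.\ that $\sum_y p_\theta(y|x)^\alpha\,\rself(y)^2$ converges locally uniformly for $\alpha\ge1$, to permit term-by-term differentiation. We expect no further obstacle.
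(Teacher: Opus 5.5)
Your proof is correct and follows the paper's route exactly. The paper obtains the proposition by differentiating the first-order identity $dP/d\alpha = P(\alpha)(1-P(\alpha))\deltasg(\alpha)$ with the product rule, reusing Corollary~\ref{cor:gap_decomposition} for $dP/d\alpha$ and Lemma~\ref{lem:gap_dynamics} for $d\deltasg/d\alpha$; your regularity remarks (finite response space, degenerate classes) are a sound addition that the paper leaves implicit.
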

Proposition \ref{thm:curvature} follows directly by differentiating the first-order performance relation in \eqref{eq:gap_decomposition} and applying Lemma~\ref{lem:gap_dynamics}. The sign of \eqref{eq:curvature} is determined by two factors. First, the term $(1-2P(\alpha))\deltasg^2$ could remain positive on hard queries with $P(\alpha)<1/2$ and negative on easy queries with $P(\alpha)>1/2$. From Lemma~\ref{lem:gap_dynamics}, the variance-difference term equals $d\delta(\alpha)/d\alpha$ and therefore indicates whether the self-reward gap is widening or narrowing as $\alpha$ increases. This decomposition immediately yields a simple sufficient condition for concavity. If $P(\alpha)>1/2$ and $\sigma_C^2(\alpha)\leq\sigma_I^2(\alpha)$, both terms in \eqref{eq:curvature} are non-positive, and hence $\frac{d^2P}{d\alpha^2}\leq 0$. Therefore, for queries that are already more likely to be solved than not under this variance condition,
the performance is locally concave in $\alpha$. If $\delta(\alpha)>0$ also holds, the performance still increases with $\alpha$, but at a decreasing rate.

\subsection{Generalization to Sample-Based Objectives}

We further generalize the preceding analysis to objectives defined over
multiple responses sampled from the power distribution. Let $s$ index a scenario and $Y_1,\ldots,Y_{K_s}$ be independently sampled from
$\pi_\alpha(y\mid x)$. Let $\ell_s(Y_{1:K_s})$ denote a statistic of interest with no explicit dependence on $\alpha$.
We define the corresponding scenario objective as
\begin{equation} \label{eq:scenario_objective}
    J_s(\alpha)
    =
    \mathbb{E}
    \left[
        \ell_s(Y_{1:K_s})
    \right],
\end{equation}
where the expectation is taken over the $K_s$ independent samples from
$\pi_\alpha(\cdot \mid x)$. Then the following theorem characterizes how the sharpening exponent $\alpha$ affects the scenario objective defined above.

\begin{proposition}[]
\label{thm:unified_sharpening}
The scenario objective in
\eqref{eq:scenario_objective} satisfies
\begin{equation}
    \frac{dJ_s(\alpha)}{d\alpha}
    =
    \operatorname{Cov}
    \left(
        \ell_s(Y_{1:K_s}),
        \sum_{i=1}^{K_s} r_{\mathrm{self}}(Y_i)
    \right),
    \label{eq:unified_sharpening_derivative}
\end{equation}
where the covariance is taken over the same $K_s$ independent responses
sampled from $\pi_\alpha(\cdot \mid x)$.
\end{proposition}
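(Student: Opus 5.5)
The argument mirrors the proof of Theorem~\ref{thm:sharpening_derivative}, applied to the product measure of the $K_s$ independent draws. Write the joint law of $Y_{1:K_s}$ as
\begin{equation*}
\Pi_\alpha(y_{1:K_s})=\prod_{i=1}^{K_s}\pi_\alpha(y_i\mid x)=\frac{\exp\bigl\{\alpha\sum_{i=1}^{K_s} r_{\mathrm{self}}(y_i)\bigr\}}{Z_\alpha^{K_s}},
\end{equation*}
so that $\Pi_\alpha$ is itself an exponential family in $\alpha$. Its sufficient statistic is $S(y_{1:K_s})=\sum_i r_{\mathrm{self}}(y_i)$ and its log-partition function is $K_s\log Z_\alpha$. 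Then $J_s(\alpha)=\sum_{y_{1:K_s}}\ell_s(y_{1:K_s})\Pi_\alpha(y_{1:K_s})$, and the claim reduces to the standard identity that the derivative of an expectation under an exponential family equals the covariance of the integrand with the sufficient statistic.

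The key steps are as follows. First, compute the log-derivative of the joint law:
\begin{equation*}
\frac{d}{d\alpha}\log\Pi_\alpha(y_{1:K_s})=S(y_{1:K_s})-K_s\frac{Z_\alpha'}{Z_\alpha}.
\end{equation*}
Here $Z_\alpha'/Z_\alpha=\mathbb{E}_{\pi_\alpha}[r_{\mathrm{self}}(Y)]$, as already shown in the proof of Theorem~\ref{thm:sharpening_derivative}. By independence this gives $K_s Z_\alpha'/Z_\alpha=\mathbb{E}[S(Y_{1:K_s})]$.

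Second, differentiate $J_s$ under the sum, using the fact that $\ell_s$ has no explicit dependence on $\alpha$. This yields
\begin{equation*}
\frac{dJ_s(\alpha)}{d\alpha}=\mathbb{E}\bigl[\ell_s\,(S-\mathbb{E}[S])\bigr]=\operatorname{Cov}(\ell_s,S),
\end{equation*}
which is exactly \eqref{eq:unified_sharpening_derivative}. As a sanity check, $K_s=1$ and $\ell_s=r^\star$ recovers Theorem~\ref{thm:sharpening_derivative}. Likewise, $\ell_s=\frac1K\sum_i(R_i-\bar R)^2$ should recover Theorem~\ref{theo3} after using independence to kill cross terms.

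The only real obstacle is justifying the interchange of differentiation and summation when the response space is countably infinite. Truncation to a maximum length $T$ makes the space finite, and then the interchange is trivial. For the general case, I would first try dominated convergence on a compact interval $\alpha\in[a,b]\subset[1,\infty)$, using the following bound, which holds since $p_\theta\le 1$:
\begin{equation*}
\bigl|\ell_s\,\partial_\alpha\Pi_\alpha\bigr|\le \|\ell_s\|_\infty\,|S|\,\Pi_a\cdot\text{const}.
\end{equation*}
This bound requires bounded $\ell_s$ and a finite first moment of $r_{\mathrm{self}}$ under $\pi_a$. I would state these as mild regularity assumptions; they are satisfied automatically in the finite-vocabulary, bounded-length LLM setting.
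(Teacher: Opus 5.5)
Your proposal is correct and follows essentially the same route as the paper. Both apply the log-derivative (score-function) identity to the product law of the $K_s$ draws, observe that $\partial_\alpha \log \prod_{i}\pi_\alpha(Y_i\mid x)=\sum_{i} r_{\mathrm{self}}(Y_i)-K_s\,\mathbb{E}_{\pi_\alpha}[r_{\mathrm{self}}(Y)]$, and recognize the centered sum as producing the covariance. Your exponential-family framing and your justification of interchanging differentiation and summation are extras; the paper leaves that interchange implicit.
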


\begin{proof}
    Define the joint distribution of the $K_s$
independent responses under prompt $x$ as
\begin{equation}
    q_\alpha(y_{1:K_s}\mid x)
    =
    \prod_{i=1}^{K_s}
    \pi_\alpha(y_i\mid x).
\end{equation}

Since $\ell_s(Y_{1:K_s})$ has no explicit dependence on $\alpha$, applying
the log-derivative identity gives
\begin{align}
    \frac{dJ_s(\alpha)}{d\alpha}
    &=
    \mathbb{E}_{q_\alpha}
    \left[
        \ell_s(Y_{1:K_s})
        \frac{\partial}{\partial\alpha}
        \log q_\alpha(Y_{1:K_s}\mid x)
    \right].
\end{align}

Since the $K_s$ responses are independently sampled,
\begin{align}
    \frac{\partial}{\partial\alpha}
    \log q_\alpha(Y_{1:K_s}\mid x)
    &=
    \sum_{i=1}^{K_s}
    \frac{\partial}{\partial\alpha}
    \log \pi_\alpha(Y_i\mid x).
\end{align}
From the power distribution $\pi_\alpha(y\mid x)
    =
    \frac{p_\theta(y\mid x)^\alpha}{Z_\alpha(x)}$, we have
\begin{align}
    \frac{\partial}{\partial\alpha}
    \log \pi_\alpha(Y_i\mid x)
    &=
    \log p_\theta(Y_i\mid x)
    -
    \frac{\partial}{\partial\alpha}
    \log Z_\alpha(x) \notag  \\ 
    &=
    r_{\mathrm{self}}(Y_i)
    -
    \mathbb{E}_{\pi_\alpha}
    \left[
        r_{\mathrm{self}}(Y)
    \right].  
\end{align}
Therefore,
\begin{align}
    \frac{dJ_s(\alpha)}{d\alpha}
    &=
    \mathbb{E}
    \left[
        \ell_s(Y_{1:K_s})
        \left(
            \sum_{i=1}^{K_s} r_{\mathrm{self}}(Y_i)
            -
            K_s
            \mathbb{E}_{\pi_\alpha}
            \left[
                r_{\mathrm{self}}(Y)
            \right]
        \right)
    \right].
\end{align}

Since the responses are identically distributed,
\begin{equation}
    K_s
    \mathbb{E}_{\pi_\alpha}
    \left[
        r_{\mathrm{self}}(Y)
    \right]
    =
    \mathbb{E}
    \left[
        \sum_{i=1}^{K_s} r_{\mathrm{self}}(Y_i)
    \right].
\end{equation}
Hence,
\begin{align}
    \frac{dJ_s(\alpha)}{d\alpha}
    & \!=\!
    \mathbb{E} \!
    \left[ \!
        \ell_s(Y_{1:K_s})\!\!
        \left(
            \sum_{i=1}^{K_s} r_{\mathrm{self}}(Y_i)
            \!-\!
            \mathbb{E} \!
            \left[
                \sum_{i=1}^{K_s} r_{\mathrm{self}}(Y_i) \!
            \right]
        \!\right)\!
    \right] \!\!=\!\!
    \operatorname{Cov}\!
    \left( \!
        \ell_s(Y_{1:K_s}),
        \sum_{i=1}^{K_s} r_{\mathrm{self}}(Y_i) \!
    \right).
\end{align}
\end{proof}

Proposition~\ref{thm:unified_sharpening} shows that the marginal effect of sharpening is determined by the alignment between 
the scenario-specific statistic and the aggregate self-reward of the sampled responses. For 
test-time inference, setting $K_s=1$ and $\ell_s(Y)=r^\star(Y)$ recovers the performance derivative  derived in Section \ref{sec4.2}. For rollout generation in group-relative RL, taking $\ell_s$ as the
within-group reward variance recovers $C_K(\alpha)$. This provides a common way to analyze how sharpening affects both test-time
reasoning and RL rollout generation.

\section{
Adaptive Power Sharpening in group-relative RL
}
\label{sec:rl-math500}

\subsection{Adaptive Sharpening Exponent Selection}

In this appendix, we investigate whether the rollout objective derived in
Appendix~\ref{app:proofs}  can be used to adapt the sharpening strength for group-relative RL training, focusing on GRPO. For binary rewards, the analysis of Theorem \ref{theo3} shows that the sharpening exponent $\alpha$ directly controls the within-group reward variation. Motivated by this result, we adapt $\alpha$ separately for each training prompt before generating the rollout group used for the subsequent GRPO update.

To translate the rollout objective into a practical 
procedure, we compare candidate sharpening exponents using a small number of pilot responses. For each prompt and each exponent candidate
$\alpha^{(j)} \in \Delta$, we generate $k_p$ pilot responses using
token-level low-temperature $T=1/\alpha^{(j)}$ \citep{wang2020contextual}.
Since the correctness reward $r^\star$ is available during RL post-training,
we estimate the correctness probability for each candidate as
\begin{equation}
\widehat{P}_j
=
\frac{1}{k_p}
\sum_{i=1}^{k_p}
r^\star\!\left(\tilde{y}^{(j)}_i\right),
\end{equation}
where $\tilde{y}^{(j)}_i$ denotes the $i$-th pilot response associated
with candidate exponent $\alpha^{(j)}$. Based on \eqref{eq:ck}, the corresponding within-group reward variation is estimated by
\begin{equation}
\widehat{C}_K\!\left(\alpha^{(j)}\right)
=
\frac{K-1}{K}
\widehat{P}_j
\left(1-\widehat{P}_j\right).
\label{eq:empirical_ck}
\end{equation}
We then select
\begin{equation}
\alpha^\star
=
\arg\max_{\alpha^{(j)}\in\Delta}
\widehat{C}_K\!\left(\alpha^{(j)}\right).
\label{eq:argmax_ck}
\end{equation}
If there is a tie, we choose the smallest exponent as $\alpha^\star$. Note that token-level low-temperature is only used as a practical approximation to compare candidate exponents; After selecting $\alpha^\star$,
we draw $K$ sequence-level MCMC samples from
$p_\theta(y|x)^{\alpha^\star}$ using the same sampler as in Section~\ref{sec:aps}.
These samples form the rollout group, which is evaluated with correctness reward $r^\star$
and used for the standard GRPO update. Algorithm~\ref{alg:aps-argmax-ck} summarizes the complete rollout generation and update procedure.

\begin{algorithm}[H]
\caption{Power-sharpened Rollout Generation and GRPO update}
\label{alg:aps-argmax-ck}
\begin{algorithmic}[1]
\REQUIRE Prompt $x$, current model $p_\theta$,
$\Delta$, $k_p$, $K$, $r^\star$
\FOR{$\alpha^{(j)}\in\Delta$}
  \STATE Sample $k_p$ pilots at token temperature $T=1/\alpha^{(j)}$
  \STATE Compute $\widehat{\mathcal{C}}_K(\alpha^{(j)})$ using \eqref{eq:empirical_ck}
\ENDFOR
\STATE Select $\alpha^\star$ using \eqref{eq:argmax_ck}, breaking ties with
the smallest $\alpha$
\STATE Draw $K$ sequence-level MCMC samples from $p_\theta(y|x)^{\alpha^\star}$
\STATE Evaluate the rollouts with $r^\star$ and use them for the standard GRPO update
\end{algorithmic}
\end{algorithm}

\subsection{Experimental Results}

To examine the performance gain of the designed power-sharpened rollout, we train Low-Rank Adaptation (LoRA) \citep{hu2022lora} adapters on the Qwen2.5-7B~\citep{yang2024qwen25} using a MATH-derived reasoning dataset with 12,000 examples \citep{hendrycksmath2021}. Each training update contains
24 prompts with $K=8$ responses per prompt, resulting in 192 rollouts per
update. The LoRA rank and scaling factor are 32 and 64, respectively. We use
a learning rate of $3\times10^{-6}$, a GRPO KL coefficient of $\beta=0.04$,
and a clipping range of $0.2$. For APS-assisted rollout generation, we set
$\Delta=\{1,2,4\}$ and $k_p=3$, and select $\alpha^\star$ independently
for each prompt.
Each pilot generation uses a maximum length of 1,536 tokens. 

We compare APS-assisted GRPO with Vanilla GRPO, which follows the standard GRPO training procedure without adaptive exponent selection for rollout generation. We evaluate the trained models on MATH500~\citep{lightman2023lets} using Pass@1 accuracy with one sampled response per problem. The corresponding results are summarized in Table~\ref{tab:rl-math500}.

\begin{table}[H]
\centering
\small
\setlength{\tabcolsep}{5pt}
\begin{tabular*}{\linewidth}{@{\extracolsep{\fill}}lcccc@{}}
\toprule
Method & Updates & Relative time & Pass@$1$  \\
\midrule
Base, no GRPO & -- & -- & 49.0\% \\
Vanilla GRPO & 80 & $1.00\times$ & 52.0\%  \\
APS-assisted GRPO & 62 & $0.96\times$ & 54.2\%  \\
APS-assisted GRPO & 80 & $1.41\times$ & 57.6\%  \\
\bottomrule
\end{tabular*}
\caption{ Training cost and Pass@$1$ accuracy of Qwen2.5-7B on MATH500}
\label{tab:rl-math500}
\end{table}

As shown in Table~\ref{tab:rl-math500}, APS-assisted GRPO consistently
improves over Vanilla GRPO. Under a comparable wall-clock budget,
APS-assisted GRPO achieves 54.2\% Pass@1 after 62 steps, compared with
52.0\% for Vanilla GRPO, while using slightly less training time
($0.96\times$). With the same number of 80 updates, APS-assisted GRPO
further improves Pass@1 to 57.6\%, a gain of 5.6 percentage points.

\section{Supplementary Experimental Results}
\label{app:evaluation_details}

\subsection{Additional Evaluation Details}
\label{app:additional_eval}

All inference experiments are conducted on a node with 8 GPUs using vLLM v0.9.1. 

\paragraph{Prompts.}
We follow the prompt templates used in  \citep{karan2025reasoning,ji2026scalable}, as summarized in Table~\ref{tab:eval_prompts}.

\paragraph{Hyperparameter settings of APS.} 
APS uses a maximum generation length of $T_{\max}=3072$ tokens.  For sequence-level MCMC sampling, we follow the implementation settings of \citep{karan2025reasoning}. Specifically, we use a block size of $B=192$, which determines the number of tokens updated in each
proposal block, and perform $N_{\mathrm{MCMC}}=10$ MH steps at each stage. We initialize APS with initial exponent $\alpha_1=4$, batch size $K=4$, total sample budget $C=12$, step size $\eta=2$, tolerance $\epsilon=0.05$, exponent range $\alpha \in [1,8]$ with $\alpha_{\mathrm{min}}=1$ and $\alpha_{\mathrm{max}}=8$, optional quality threshold $q_{\min}=0.05$. Unless otherwise stated, we report results averaged over 3 random seeds.

\begin{table}[htbp]
\centering
\small
\setlength{\tabcolsep}{3pt}
\begin{tabular}{p{0.14\linewidth}p{0.40\linewidth}p{0.40\linewidth}}
\toprule
\textbf{Task} & \textbf{Qwen Prompt} & \textbf{DeepSeek Prompt} \\
\midrule
\textbf{MATH500} &
{\raggedright\ttfamily Can you solve the following math problem?\newline
Please reason step by step, and put your final answer within \textbackslash\textbackslash boxed\{\}\par} &
{\raggedright\ttfamily Please reason step by step, and put your final answer within \textbackslash\textbackslash boxed\{\}\par} \\
\midrule
\textbf{HumanEval} &
{\raggedright\ttfamily Write a Python function to solve the following problem:\par} &
{\raggedright\ttfamily Complete the following Python function:\par} \\
\midrule
\textbf{GPQA} &
{\raggedright\ttfamily Answer the following multiple-choice question. The last line of your response should be of the following format: '\textbackslash\textbackslash boxed\{LETTER\}' (without quotes) where LETTER is one of ABCD (ex. '\textbackslash\textbackslash boxed\{A\}'). Think step by step before answering.\newline
\{Question\} A) \{\} B) \{\} C) \{\} D) \{\}\par} &
{\raggedright\ttfamily Answer the following multiple-choice question. The last line of your response should be of the following format: '\textbackslash\textbackslash boxed\{LETTER\}' (without quotes) where LETTER is one of ABCD (ex. '\textbackslash\textbackslash boxed\{A\}'). Think step by step before answering.\newline
\{Question\} A) \{\} B) \{\} C) \{\} D) \{\}\par} \\
\bottomrule
\end{tabular}
\caption{Prompt templates used for baselines and our method. }

\label{tab:eval_prompts}
\end{table}

\paragraph{Hyperparameter settings of baselines.}  
For low-temperature sampling, we apply token-level temperature scaling with
temperature $\tau=1/\alpha=0.25$ with $\alpha=4$.

For power sampling, we follow \citep{karan2025reasoning} with
$\alpha=4$,  $T_{\max}=3072$, 
$B=192$, and $N_{\mathrm{MCMC}}=10$ MH transitions per stage. 

For SPS, we follow \citep{ji2026scalable} with $\alpha=4$, $T_{\max}=3072$, and
$B=192$. At each decoding step, SPS considers the top
$K_t=8$ candidate tokens and uses $M_t=8$ future rollouts for each
candidate.

For Power-SMC, we use the reported results from
\citep{azizi2026powersmc} and \citep{meng2026degs}, depending on
the corresponding model and benchmark. For APPS, we use the results reported
in \citep{nguyen2026future}. We refer to the original papers for their
implementation details and hyperparameter settings.

\subsection{Effect of MCMC Steps}
\label{app:mcmc_steps}

We vary the number of MH steps, $N_{\mathrm{MCMC}}$, and assess whether the default
setting is sufficient and how APS is affected by MCMC approximation quality. The evaluation on MATH500 uses Qwen2.5-7B,
Qwen2.5-Math-7B, and DeepSeek-Math-7B, with
$N_{\mathrm{MCMC}}\in\{6,8,10,12\}$ steps per block. Using the same sampling settings as in Appendix~\ref{app:additional_eval}, we compare APS
with fixed-exponent power sampling at $\alpha=4$. The Pass@1 accuracy of base decoding  in Table~\ref{tab:main_results} is include  as fixed references.

Figure~\ref{fig:mcmc_steps} shows that $N_{\mathrm{MCMC}}=10$ MH steps are sufficient under
the tested settings. To be specific, increasing $N_{\mathrm{MCMC}}$ from 10 to 12 changes
APS Pass@1 by $-0.6$, $+0.1$, and $-0.5$ percentage points on Qwen2.5-7B,
Qwen2.5-Math-7B, and DeepSeek-Math-7B, respectively. Additional MH steps beyond the default setting therefore provide no consistent accuracy gain.

APS also maintains its advantage when fewer MH steps are used, outperforming
power sampling at every evaluated step count for all three models. Even with only
6 steps, APS improves Pass@1 accuracy by $2.5$, $5.5$, and $6.2$ percentage points on the
three models, compared with $2.8$, $6.6$, and $4.8$ points at 10 steps.
The persistent advantage indicates that APS is robust to the tested range of MH steps.

\begin{figure}[htbp]
  \centering
  \includegraphics[width=\linewidth]{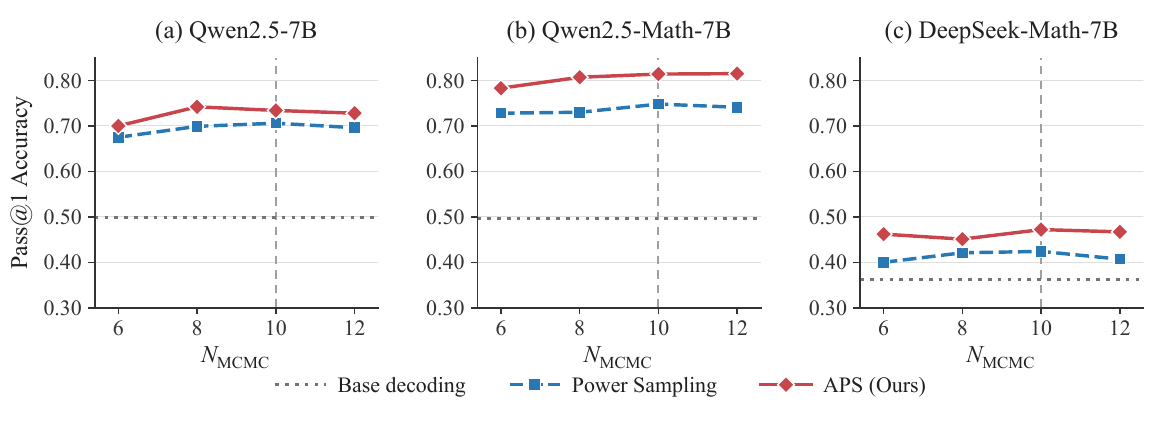}
  \caption{Pass@1 accuracy on MATH500 under different numbers of MH steps. The vertical dashed line marks the default $N_{\mathrm{MCMC}}=10$.
  }
  \label{fig:mcmc_steps}
\end{figure}

\subsection{Effect of Sample Budget}
\label{app:sample_budget}

We vary the total sample budget $C$ and evaluate APS with Qwen2.5-Math-7B on HumanEval with $C\in\{4,8,12,16\}$  while keeping the other settings in Appendix~\ref{app:additional_eval} fixe.  The Pass@1 
accuracies of base decoding and power sampling in Table~\ref{tab:main_results} are include  as fixed references.

Figure~\ref{fig:sample_budget} shows that the reasoning performance of APS generally improves as the total sample
budget $C$ increases. For example, increasing $C$ from 4 to 16 raises
Pass@1 accuracy from 0.585 to 0.646, a gain of 6.1 percentage points. This is because 
a larger budget allows APS to perform more rounds of sharpening exponent adaptation,
providing more opportunities to refine $\alpha$ and identify a sharpening
strength better suited to the current query.

\begin{figure}[H]
  \centering
  \includegraphics[width=0.6\linewidth]{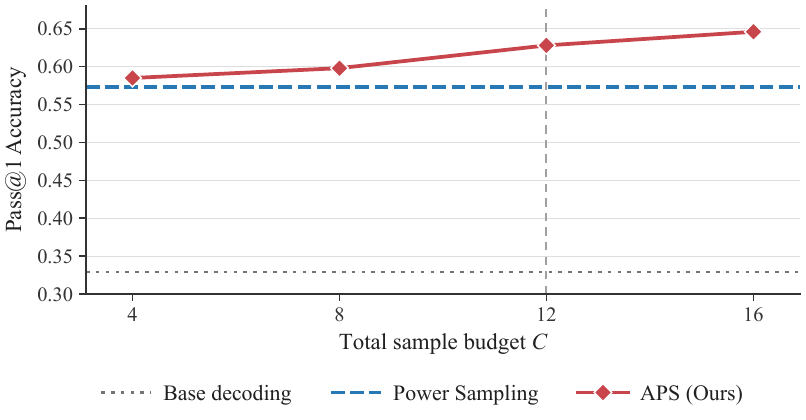}
  \caption{Pass@1 accuracy on HumanEval with Qwen2.5-Math-7B under different sample budgets. The vertical dashed line marks the default $C=12$.}
  \label{fig:sample_budget}
\end{figure}

\subsection{Log-Likelihood and Confidence Densities}
\label{app:score_densities}

We examine log-likelihood and confidence densities of responses generated by different sampling methods on MATH500 with Qwen2.5-Math-7B. For each response $y$, we compute its mean token log-likelihood $L(y)=T^{-1}\sum_{t=1}^{T}\log p_\theta(y^t\mid x,y^{<t})$ and use the mean negative predictive entropy $C(y)=-T^{-1}\sum_{t=1}^{T}H[p_\theta(\cdot\mid x,y^{<t})]$ as a confidence score \citep{karan2025reasoning}. Both quantities are evaluated under the base model. For each query, we collect one response from base decoding, one from fixed-exponent power sampling with $\alpha=4$, and one from APS,
and compute $L(y)$ and $C(y)$ for each response.

\

Figures~\ref{fig:loglik_density} and~\ref{fig:confidence_density} show the
empirical densities of mean token log-likelihood and negative predictive entropy. Relative to base decoding, fixed-exponent power sampling concentrates both distributions near zero, indicating a
preference for responses with higher base-model likelihood and confidence. APS preserves a similar high-density mode but exhibits broader distributions (standard deviation: $0.0201\!\rightarrow\!0.0494$ for $L(y)$ and
$0.0402\!\rightarrow\!0.1013$ for $C(y)$), with slightly lower mean scores
(mean: $-0.0329\!\rightarrow\!-0.0421$ for $L(y)$ and
$-0.0671\!\rightarrow\!-0.0858$ for $C(y)$).
These results suggest that APS retains the high-likelihood, high-confidence
region favored by power sampling while covering a broader range of responses
through query-adaptive sharpening.

\begin{figure}[H]
  \centering
  \includegraphics[width=0.65\linewidth]{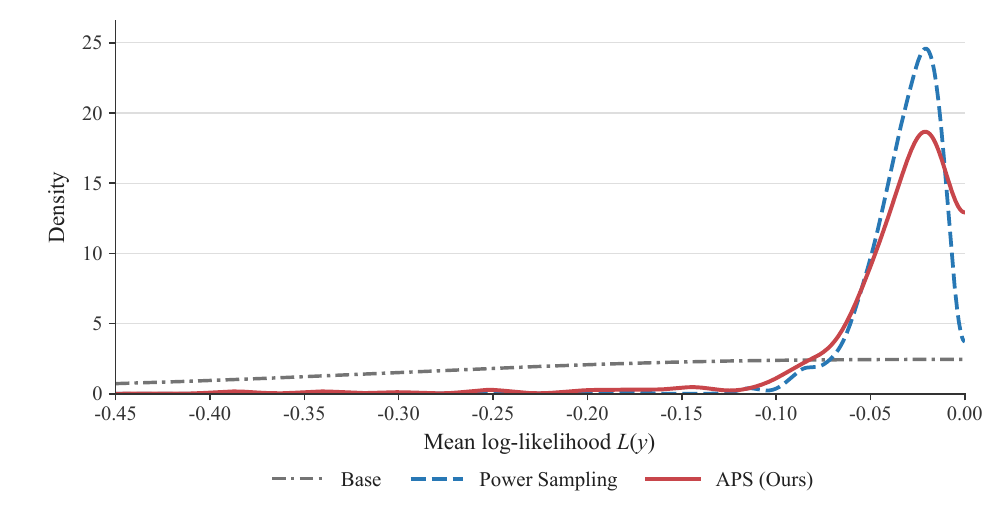}
  \caption{Density of mean token-level log-likelihood on MATH500 with
  Qwen2.5-Math-7B.}
  \label{fig:loglik_density}
\end{figure}

\begin{figure}[H]
  \centering
  \includegraphics[width=0.65\linewidth]{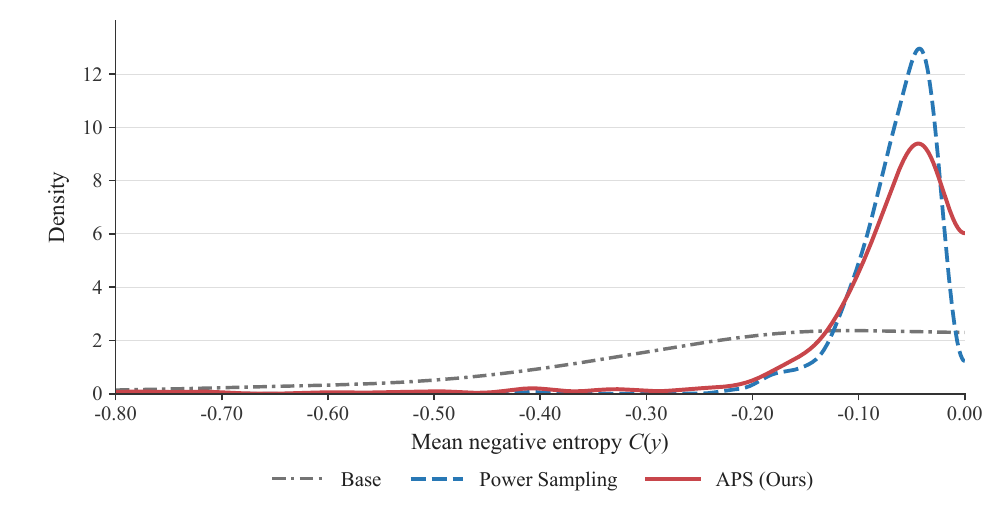}
  \caption{Density of mean negative predictive entropy on MATH500 with Qwen2.5-Math-7B. }
  \label{fig:confidence_density}
\end{figure}

\subsection{More Qualitative Examples}
\label{app:qualitative}
\raggedbottom

Tables~\ref{tab:qual:math500_pass1}--\ref{tab:qual:gpqa_pass1}
compare APS, power sampling, and GRPO on representative examples from the
three benchmarks considered in this paper. The first two examples use Qwen2.5-Math-7B, while the last example uses Qwen2.5-7B. Across the three examples, APS returns the correct output whenever fixed-exponent power sampling fails. In the MATH500 example, GRPO also succeeds, while APS
adapts $\alpha$ away from the fixed value and recovers the correct answer. In the HumanEval and GPQA examples, both power sampling and GRPO fail, whereas APS returns the correct output with a query-specific sharpening exponent.
Together, these cases illustrate the potential benefit of adapting the sharpening
strength to individual queries.

\begin{table}[H]
\centering
\small
\setlength{\tabcolsep}{5pt}
\renewcommand{\arraystretch}{1.15}
\begin{tabular}{@{}p{0.20\linewidth}p{0.60\linewidth}p{0.12\linewidth}@{}}
\toprule
\multicolumn{3}{@{}p{0.96\linewidth}@{}}{\itshape
The results of a cross-country team's training run are graphed
as distance against time. The graph contains Evelyn $(1.25,4.5)$,
Briana $(2.5,2.2)$, Carla $(4.25,5.2)$, Debra $(5.6,2.8)$, and
Angela $(6.8,1.4)$. Which student has the greatest average speed?
\newline Correct answer: $\boxed{\text{Evelyn}}$.} \\
\midrule
Method & Representative response (excerpt) & Result \\
\midrule

APS\newline
($\alpha^\star{=}3.605$)
&
The response computes distance divided by time for each student and reports
\texttt{('Evelyn', 3.6)}. It concludes:
``The student with the greatest average speed is Evelyn, with an average
speed of 3.6 units per time unit.''
Final answer: $\boxed{\text{Evelyn}}$.
&
True
\\

\midrule

Power Sampling\newline
($\alpha{=}4$)
&
The response extracts the same five points and writes the correct code
for the ratios and their maximum, but its generated output is
\texttt{('Carla', 1.2205882352941178)}.
It concludes:
``The student with the greatest average speed is Carla,
with an average speed of approximately 1.22 units per time unit.''
Final answer: $\boxed{\text{Carla}}$.
&
False
\\

\midrule

GRPO
&
The response computes the average speeds as Evelyn: $3.6$;
Briana: $0.88$; Carla: $1.22$; Debra: $0.5$; Angela: $0.206$.
It concludes:
``Based on the calculated average speeds, Evelyn has the greatest
average speed of 3.6 units.''
Final answer: $\boxed{\text{Evelyn}}$.
&
True
\\

\bottomrule
\end{tabular}

\caption{A representative MATH500 example with Qwen2.5-Math-7B. APS and GRPO return the
correct answer, while power sampling returns an incorrect
answer despite using the correct computation.}
\label{tab:qual:math500_pass1}
\end{table}

\begin{table}[H]
\centering
\small
\setlength{\tabcolsep}{5pt}
\renewcommand{\arraystretch}{1.15}
\begin{tabular}{@{}p{0.20\linewidth}p{0.60\linewidth}p{0.12\linewidth}@{}}
\toprule
\multicolumn{3}{@{}p{0.96\linewidth}@{}}{\itshape
 Complete
\texttt{intersperse(numbers: List[int], delimeter: int) -> List[int]}. Insert \texttt{delimeter} between every two consecutive elements of
\texttt{numbers}. \newline Expected outputs:
\texttt{intersperse([], 4) = []} and
\texttt{intersperse([1, 2, 3], 4) = [1, 4, 2, 4, 3]}.} \\
\midrule
Method & Completion & Passed \\
\midrule

APS\newline
($\alpha^\star{=}2.497$)
&
\begin{minipage}[t]{\linewidth}\small\ttfamily
result = []\\
for i in range(len(numbers)):\\
\hspace*{1.5em}result.append(numbers[i])\\
\hspace*{1.5em}if i < len(numbers) - 1:\\
\hspace*{3em}result.append(delimeter)\\
return result
\end{minipage}
&
True
\\

\midrule

Power Sampling\newline
($\alpha{=}4$)
&
\begin{minipage}[t]{\linewidth}\small\ttfamily
result = []\\
for i in range(len(numbers) - 1):\\
\hspace*{1.5em}result.append(numbers[i])\\
\hspace*{1.5em}result.append(delimeter)\\
result.append(numbers[-1])\\
return result
\end{minipage}
&
False
\\

\midrule

GRPO
&
\begin{minipage}[t]{\linewidth}\small\ttfamily
return [delimeter] * (len(numbers) - 1) + numbers
\end{minipage}
&
False
\\

\bottomrule
\end{tabular}

\caption{A representative HumanEval example with Qwen2.5-Math-7B.
APS passes the tests, while power sampling fails on the empty list and GRPO places the delimiters before the input elements rather than between them.}
\label{tab:qual:humaneval_pass1}
\end{table}

\begin{table}[H] 
\centering 
\small 
\setlength{\tabcolsep}{5pt} 
\renewcommand{\arraystretch}{1.15} 
\begin{tabular}{@{}p{0.20\linewidth}p{0.60\linewidth}p{0.12\linewidth}@{}} 
\toprule 

\multicolumn{3}{@{}p{0.96\linewidth}@{}}{\itshape  
``Oh, I know you,'' the ribonucleoprotein particle says to the nascent 
chain as they meet. ``Pause there for a minute. Let me show you in; you 
really need some sugar.'' 
``It seems somewhat rough. I guess this is goodbye; I need to be on my 
way'', the chain replies. Where did they meet, and where is the chain 
heading? 
A) cytosol to the extracellular space; B) Golgi to the mitochondrion; 
C) membrane to the nucleus; D) ribosome to the proteasome.\newline 
Correct answer: $\boxed{A}$.} \\ 

\midrule 
Method & Representative response (excerpt) & Result \\ 
\midrule 

APS\newline
($\alpha^\star{=}3.754$)
&
\begin{minipage}[t]{\linewidth}\small 
``A) cytosol to the extracellular space: This is a common pathway for 
proteins synthesized in the cytosol to be secreted. [\ldots] 
D) ribosome to the proteasome: This is a pathway for proteins that are 
targeted for degradation. [\ldots] 
Given the context and the dialogue, the most likely scenario is that 
the nascent chain is being processed and then secreted from the cell. 
Therefore, the correct answer is: $\boxed{A}$'' 
\end{minipage}
&
True
\\ 

\midrule 

Power Sampling\newline
($\alpha{=}4$)
&
\begin{minipage}[t]{\linewidth}\small 
``D) ribosome to the proteasome: This option involves the nascent chain 
moving from the ribosome to the proteasome. The dialogue suggests that 
the nascent chain is being guided into a specific location (the ribosome) 
and then leaving that location (the proteasome). 
[\ldots] Based on the analysis, the most likely answer is: $\boxed{D}$'' 
\end{minipage}
&
False
\\ 

\midrule 

GRPO
&
\begin{minipage}[t]{\linewidth}\small 
``D) Ribosome to the proteasome: This option fits because the nascent 
chain is being guided into the Golgi apparatus, where it will undergo 
glycosylation, and then it will be transported to the proteasome for 
degradation if it's not needed. 
[\ldots] Therefore, the correct answer is $\boxed{D}$.'' 
\end{minipage}
&
False
\\ 

\bottomrule 
\end{tabular} 

\caption{A representative GPQA example with Qwen2.5-7B. APS returns the
correct answer A, while power sampling and GRPO both return the wrong answer D.}
\label{tab:qual:gpqa_pass1} 

\end{table}

\end{document}